\title{On Differentially Private Online Predictions}
\author{
Haim Kaplan\thanks{Tel Aviv University and Google Research. {\tt haimk@tau.ac.il}. Partially supported by Israel Science Foundation (grant 1595/19),  and the Blavatnik Family Foundation.}
\and
Yishay Mansour\thanks{Tel Aviv University and Google research. {\tt mansour.yishay@gmail.com}. Work partially funded from the European Research Council (ERC) under the European Union’s Horizon 2020 research and innovation program (grant agreement No. 882396), by the Israel Science Foundation (grant number 993/17), Tel Aviv University Center for AI and Data Science (TAD), and the Yandex Initiative for Machine Learning at Tel Aviv University.}
\and
Shay Moran\thanks{Departments of Mathematics and Computer Science, Technion and Google Research. {\tt smoran@technion.ac.il} \emph{Shay Moran} is a Robert J.\ Shillman Fellow; he acknowledges support by ISF grant 1225/20, by BSF grant 2018385, by an Azrieli Faculty Fellowship, by Israel PBC-VATAT, by the Technion Center for Machine Learning and Intelligent Systems (MLIS), and by the European Union (ERC, GENERALIZATION, 101039692). Views and opinions expressed are however those of the author(s) only and do not necessarily reflect those of the European Union or the European Research Council Executive Agency. Neither the European Union nor the granting authority can be held responsible for them.}
\and
Kobbi Nissim\thanks{Department of Computer Science, Georgetown University. {\tt kobbi.nissim@georgetown.edu}. Work partially funded by NSF grant No.~2001041 and by a gift to Georgetown University.}
\and
Uri Stemmer\thanks{Tel Aviv University and Google research. {\tt u@uri.co.il}. Partially supported by the Israel Science Foundation (grant 1871/19) and by Len Blavatnik and the Blavatnik Family foundation.}
}
\date{February 27, 2023}
\newcommand{\smallminus}{\scalebox{0.6}[1.0]{-}}
\newcommand{\dbtilde}[1]{\tilde{\raisebox{0pt}[0.85\height]{$\tilde{#1}$}}}
\def\restrict#1{\raise-.5ex\hbox{\ensuremath|}_{#1}}
\DeclareSymbolFont{AMSb}{U}{msb}{m}{n}
\DeclareMathSymbol{\N}{\mathbin}{AMSb}{"4E}
\DeclareMathSymbol{\Z}{\mathbin}{AMSb}{"5A}
\DeclareMathSymbol{\R}{\mathbin}{AMSb}{"52}
\DeclareMathSymbol{\Q}{\mathbin}{AMSb}{"51}
\DeclareMathSymbol{\erert}{\mathbin}{AMSb}{"50}
\DeclareMathSymbol{\I}{\mathbin}{AMSb}{"49}
\DeclareMathSymbol{\C}{\mathbin}{AMSb}{"43}
\newcommand{\mynote}[2]{{\textcolor{#1}{ #2}}}
\definecolor{gray}{gray}{0.4}
\newcommand{\gray}[1]{\mynote{gray}{{\footnotesize #1}}}
\newcommand{\remove}[1]{}
\newtheorem{theorem}{Theorem}[section]
\newtheorem{lemma}[theorem]{Lemma}
\newtheorem{definition}[theorem]{Definition}
\newtheorem{remark}[theorem]{Remark}
\newtheorem{example}[theorem]{Example}
\newcommand{\1}{\mathbbm{1}}
\newcommand{\AAA}{\mathcal A}
\newcommand{\BBB}{\mathcal B}
\newcommand{\WWW}{\mathcal W}
\newcommand{\HHH}{\mathcal H}
\newcommand{\III}{\mathcal I}
\newcommand{\MMM}{\mathcal M}
\newcommand{\ZZZ}{\mathcal Z}
\newcommand{\XXX}{\mathcal X}
\newcommand{\YYY}{\mathcal Y}
\newcommand{\eps}{\varepsilon}
\newcommand{\Lap}{\operatorname{\rm Lap}}
\newcommand{\Ldim}{\operatorname{\rm Ldim}}
\def\E{\operatorname*{\mathbb{E}}}
\def\Q{\operatorname*{\mathbb{Q}}}
\def\Lap{\mathop{\rm{Lap}}\nolimits}
\def\OPT{\mathop{\rm{OPT}}\nolimits}
\def\@opargbegintheorem#1#2#3{\trivlist
\item[\hskip\dimexpr\labelsep+0pt\relax{\bf #1\ #2}]({\bf #3}).\ \itshape}
\newenvironment{proofsketch}{%
  \proof}{\endproof}
\begin{document}

\maketitle

\begin{abstract}

In this work we introduce an interactive variant of joint differential privacy towards handling online processes in which existing privacy definitions seem too restrictive. We study basic properties of this definition and demonstrate that it satisfies (suitable variants) of group privacy, composition, and post processing.

We then study the cost of interactive joint privacy in the basic setting of online classification. We show that any (possibly non-private) learning rule can be {\em effectively} transformed to a private learning rule with only a polynomial overhead in the mistake bound. This demonstrates a stark difference with more restrictive notions of privacy such as the one studied by \cite{GolowichL21}, %
where only 
a double exponential overhead on the mistake bound is known (via an information theoretic upper bound).
\end{abstract}

\section{Introduction}

In this work we introduce a new variant of differential privacy (DP)~\citep{DMNS06}, suitable for interactive processes, and design new online learning algorithms that satisfy our definition. As a motivating story, consider a chatbot that continuously improves itself by learning from the conversations it conducts with users. As these conversations might contain sensitive information, we would like to provide privacy guarantees to the users, in the sense that the content of their conversations with the chatbot would not leak. This setting flashes out the following two requirements. 
\begin{enumerate}
    \item[(1)] Clearly, the answers given by the chatbot to user $u_i$ must depend on the queries made by user $u_i$. 
    For example, the chatbot should provide different answers when asked by user $u_i$ for the weather forecast in Antarctica, and when asked by $u_i$ for a pasta recipe.

    This is in contrast to the plain formulation of differential privacy, where it is required that {\em all} of the mechanism outputs would be (almost) independent of any single user input. Therefore, the privacy requirement we are aiming for is that the conversation of user $u_i$ will remain ``hidden'' from {\em other} users, and would not leak through the {\em other} users' interactions with the chatbot. Moreover, this should remain true even if a ``privacy attacker'' (aiming to extract information about the conversation user $u_i$ had) conducts {\em many} different conversations with the chatbot.
    
    \item[(2)] The interaction with the chatbot is, by design, {\em interactive} and {\em adaptive}, as it aims to conduct dialogues with the users. This allows the privacy attacker (mentioned above) to choose its queries to the  chatbot {\em adaptively}. Privacy, hence, needs to be preserved even in the presence of adaptive attackers. %
\end{enumerate}

While each of these two requirements was studied in isolation, to the best of our knowledge, they have not been unified into a combined privacy framework. Requirement (1) was formalized by \cite{KearnsPRRU15} as {\em joint differential privacy (JDP)}. It provides privacy against {\em non-adaptive} attackers.
Intuitively, in the chatbot example, JDP aims to hide the conversation of user $u_i$ from any privacy attacker that  {\em chooses in advance} all the queries it poses to the chatbot. This is unsatisfactory since the adaptive nature of this process  invites adaptive attackers.

Requirement (2) was studied in many different settings, but to the best of our knowledge, only w.r.t.\ the plain formulation of DP, where the (adaptive) privacy attacker sees {\em all} of the outputs of the mechanism. Works in this vein include \citep{DNRRV09,ChanSS10,PMW-HR10,DRV10,BunSU16,KaplanMS21,JainRSS22}. 
In the chatbot example, plain DP would require, in particular, that even the messages sent from the chatbot to user $u_i$ would reveal (almost) no information about $u_i$. In theory, this could be obtained by making sure that the {\em entire} chatbot model is computed in a privacy preserving manner, such that even its full description leaks almost no information about any single user. Then, when user $u_i$ comes, we can ``simply'' share the model with her, and let her query it locally on her device. But is likely unrealistic with large models involving hundreds of billions of parameters.

In this work we introduce {\em challenge differential privacy}, which can be viewed as an interactive variant of JDP, aimed at maintaining privacy against adaptive privacy attackers. Intuitively, in the chatbot example, our definition would guarantee that even an adaptive attacker that controls {\em all} of the users except for user $u_i$, learns (almost) no information about the conversation user $u_i$ had with the chatbot. We give the formal definition of challenge-DP in Section~\ref{sec:challengeDP}, after surveying the existing variants of differential privacy in Section~\ref{sec:prelims}. In addition, we show that challenge-DP is closed under post-processing, composition, and group-privacy (where the first two properties are immediate, and the third is more subtle).

\subsection{Private Online Classification}

We initiate the study of challenge differential privacy in the basic setting of online classification.
Let $\XXX$ be the domain, $\YYY$ be the label space, and $\ZZZ=\XXX\times \YYY$ be set of labeled examples.
An online learner is a (possibly randomized) mapping $\AAA: \ZZZ^\star \times \XXX \to \YYY$.
That is, it is a mapping that maps a finite sequence $S\in \ZZZ^\star$ (the past examples),  
and an unlabeled example $x$ (the current query point) to a label $y$, which is denoted by $y=\AAA(x ; S)$.

Let $\mathcal{H}\subseteq \mathcal{Y}^\mathcal{X}$ be a hypothesis class. 
    A sequence $S\in \ZZZ^\star$ is said to be realizable by $\HHH$ if there exists $h\in\HHH$ such that $h(x_i)=y_i$ for every $(x_i,y_i)\in S$.
    For a sequence $S=\{(x_t,y_t)\}_{t=1}^T\in \ZZZ^\star$ we write $\MMM(\AAA; S)$ for the random variable denoting the number of mistakes $\AAA$ makes during the execution on $S$. That is
\[\MMM\bigl(\AAA;S\bigr) = \sum_{t=1}^T 1\{\hat y_t \neq y_t\},\] %
where $\hat y_t = \AAA(x_t;S_{<t})$ is the (randomized) prediction of $\AAA$ on $x_t$.
\begin{definition}[Online Learnability: Realizable Case]\label{def:onlineLearnability}
We say that a hypothesis class $\HHH$ is online learnable 
if there exists a learning rule $\AAA$ such that $\E\left[\MMM\bigl(\AAA;S\bigr)\right] =o(T)$ for every sequence $S$ which is realizable by $\HHH$.
\end{definition}

\begin{remark}
Notice that Definition~\ref{def:onlineLearnability} corresponds to an {\em oblivious} adversary, as it quantifies over the input sequence in advance. This should not be confused with the adversaries considered in the context of \emph{privacy} which are always adaptive in this work.
In the non-private setting, focusing on oblivious adversaries does not affect generality in terms of utility. This is less clear when privacy constraints are involved.\footnote{In particular, \cite{GolowichL21} studied both oblivious and adaptive adversaries, and obtained very different results in these two cases.} We emphasize that our results (our mistake bounds) continue to hold even when the realizable sequence is chosen by an adaptive (stateful) adversary, that at every point in time chooses the next input to the algorithm based on all of the previous outputs of the algorithm.
\end{remark}

A classical result due to \cite{Littlestone88online} characterizes online learnability (without privacy constraints) 
in terms of the Littlestone dimension. The latter is a combinatorial parameter of $\HHH$ which was named after Littlestone by~\cite{Ben-DavidPS09}. 

In particular, Littlestone's characterization implies the following dichotomy: if $\HHH$ has finite Littlestone dimension $d$ then there exists a (deterministic) learning rule which makes at most $d$ mistakes on every realizable input sequence. In the complementing case, when the Littlestone dimension of $\HHH$ is infinite, for every learning rule $\AAA$ and every $T\in\mathbb{N}$ there exists a realizable sequence $S$ of length $T$ such that $\E\left[\MMM\bigl(\AAA;S\bigr)\right] \geq T/2$. In other words, as a function of $T$, the optimal mistake bound is either uniformly bounded by the Littlestone dimension, or it is $\geq T/2$. Because of this dichotomy, in some places online learnability is defined with respect to a uniform bound on the number of mistakes (and not just a sublinear one as in the above definition). In this work we follow the more general definition. 

\medskip

We investigate the following questions:
\begin{center}
\emph{Can every online learnable class be learned by an algorithm which satisfies challenge differential privacy?
What is the optimal mistake bound attainable by private learners?
}
\end{center}
Our main result in this part provides an affirmative answer to the first question.
We show that for any class $\HHH$ with Littlestone dimension $d$ there exists an $(\eps,\delta)$-challenge-DP learning
rule which makes at most 
\[\tilde{O}\left( \frac{d^2}{\eps^2}  \log^2\left(\frac{1}{\delta}\right) \log^2\left(\frac{T}{\beta}\right) \right)\]
mistakes, with probability $1-\beta$, on every realizable sequence of length $T$.
\emph{Remarkably, our proof provides an efficient transformation taking a non-private learner to a private one:}
that is, given a black box access to a learning rule $\AAA$ which makes at most $M$ mistakes in the realizable case,
we efficiently construct an $(\eps,\delta)$-challenge-DP learning rule $\AAA'$ which makes at most 
$\tilde{O}\left( \frac{M^2}{\eps^2}  \log^2\left(\frac{1}{\delta}\right) \log^2\left(\frac{T}{\beta}\right) \right)$ mistakes.

\subsubsection{Construction overview}

We now give a simplified overview of our construction, called \texttt{POP}, which transforms a non-private online learning algorithm into a private one (while maintaining computational efficiency). 
Let $\AAA$ be a non-private algorithm, guaranteed to make at most $d$ mistakes in the realizable setting. 
We maintain $k$ copies of $\AAA$. Informally, in every round $i\in[T]$ we do the following: 
\begin{enumerate}[itemsep=-2pt]
    \item Obtain an input point $x_i$.
    \item\label{step:agg} Give $x_i$ to each of the $k$ copies of $\AAA$ to obtain predicted labels $\hat{y}_{i,1},\dots,\hat{y}_{i,k}$.
    \item Output a ``privacy preserving'' aggregation $\hat{y}_i$ of  $\left\{\hat{y}_{i,1},\dots,\hat{y}_{i,k}\right\}$, which is some variant of noisy majority. This step will only satisfy our notion of challenge-DP.
\item Obtain the ``true'' label $y_i$.
\item Let $\ell\in[k]$ be chosen at random.
\item Rewind all of the copies of algorithm $\AAA$ except for the $\ell$th copy, so that they ``forget'' ever seeing $x_i$.
\item Give the true label $y_i$ to the $\ell$th copy of $\AAA$.
\end{enumerate}

As we aggregate the predictions given by the copies of $\AAA$ using (noisy) majority, we know that if the algorithm errs than at least a constant fraction of the copies of $\AAA$ err. As we feed the true label $y_i$ to a random copy, with constant probability, the copy which we do not rewind incurs a mistake at this moment. That is, whenever we make a mistake then with constant probability one of the copies we maintain incurs a mistake. This can happen at most $\approx k\cdot d$ times, since we have $k$ copies and each of them makes at most $d$ mistakes. This allows us to bound the number of mistakes made by our algorithm (w.h.p.). The privacy analysis is more involved. Intuitively, by rewinding all of the copies of $\AAA$ (except one) in every round, we make sure that a single user can affect the inner state of at most one of the copies. This allows us to efficiently aggregate the predictions given by the copies in a privacy preserving manner. The subtle point is that the prediction we release in time $i$ {\em does} require querying {\em all} the experts on the current example $x_i$ (before rewinding them). Nevertheless, we show that this algorithm is private.

\subsubsection[Comparison with Golowich and Livni (2021)]{Comparison with \cite{GolowichL21}}
The closest prior work to this manuscript is by Golowich and Livni who also studied the problem of private online classification, but under a more restrictive notion of privacy than challenge-DP. In particular their definition requires that the sequence of \underline{predictors} which the learner uses to predict in each round does not compromise privacy. In other words, it is as if at each round the learner publishes the entire truth-table of its predictor, rather than just its current prediction. This might be too prohibitive in certain applications such as the chatbot example illustrated above. Golowich and Livni show that even with respect to their more restrictive notion of privacy it is possible to online learn every Littlestone class.
However, their mistake bound is doubly exponential in the Littlestone dimension (whereas ours is quadratic), and their construction requires more elaborate access to the non-private learner. In particular, it is not clear whether their construction can be implemented efficiently.

\subsection{Additional Related Work}

Several works studied the related problem of {\em private learning from expert advice} \citep{DworkNPR10,JainKT12,ThakurtaS13,DworkR14,JainThakurta14,Agarwal17a,AsiFKT23}. These works study a variant of the experts problem in which the learning algorithm has access to $k$ {\em experts}; on every time step the learning algorithm chooses one of the experts to follow, and then observes the {\em loss} of each expert. The goal of the learning algorithm is that its accumulated loss will be competitive with the loss of the {\em best expert in hindsight}. In this setting %
the private data is the sequence of losses observed throughout the execution, and the privacy requirement is that the sequence of experts chosen by the algorithm should not compromise the privacy of the sequence of losses.\footnote{\cite{AsiFKT23} study a more general framework of adaptive privacy in which the private data is an auxiliary sequence $(z_1,\ldots, z_T)$.  %
During the interaction with the learner, these $z_t$'s are used (possibly in an adaptive way) to choose the sequence of loss functions.} 
When applying these results to our context, the set of experts is the set of hypotheses in the class $\mathcal{H}$, which means that the outcome of the learner (on every time step) is a complete model (i.e., a hypothesis). That is, in our context, applying prior works on private prediction from expert advice would result in a privacy definition similar to that of \cite{GolowichL21} that accounts (in the privacy analysis) for releasing complete models, rather than just the predictions, which is significantly more restrictive.

There were a few works that studied private learning in online settings under the constraint of JDP. For example, \cite{ShariffS18} studied the stochastic contextual linear bandits problem under JDP. Here, in every round $t$ the learner 
receives a {\em context} $c_t$, then it selects an {\em action} $a_t$ (from a fixed set of actions), and finaly it receives a reward $y_t$ which depends on $(c_t,a_t)$ in a linear way. The learner's objective is to maximize cumulative reward. The (non-adaptive) definition of JDP means that action $a_t$ is revealed only to user $u_t$. Furthermore, it guarantees that the inputs of user $u_t$ (specifically the context $c_t$ and the reward $y_t$) do not leak to the other users via the actions they are given, provided that all these other users {\em fix their data in advance}. This non-adaptive privacy notion fits the stochastic setting of \cite{ShariffS18}, but (we believe) is less suited for adversarial processes like the ones we consider in this work. We also note that the algorithm of \cite{ShariffS18} in fact satisfies the more restrictive privacy definition which applies to the sequence of predictors (rather than the sequence of predictions), similarly to the that of \cite{GolowichL21}.

A parallel (unpublished) work by \cite{OtherPaper} studied a related setting, which can be viewed as an ``evolving'' variant of the private PAC learning model. They also use an adaptive variant of JDP, similar to our notion of privacy, which is tailored to their stochastic setting.

\section{Preliminaries}\label{sec:prelims}

\paragraph{Notation.}
Two datasets $S$ and $S'$ are called {\em neighboring} if one is obtained from the other by adding or deleting one element, e.g., $S'=S\cup\{x'\}$.
For two random variables $Y,Z$ we write $X\approx_{(\eps,\delta)}Y$ to mean that for every event $F$ it holds that 
$\Pr[X\in F] \leq e^{\eps}\cdot\Pr[Y\in F]+\delta$, and $\Pr[Y\in F]\leq e^{\eps}\cdot\Pr[X\in F]+\delta$. Throughout the paper we assume that the privacy parameter $\eps$ satisfies $\eps=O(1)$, but our analyses trivially extend to larger values of epsilon.

\medskip The standard definition of differential privacy is,

\begin{definition}[\citep{DMNS06}]\label{def:DP}
Let $\MMM$ be a randomized algorithm that operates on datasets.
Algorithm $\MMM$ is $(\eps,\delta)$-{\em differentially private (DP)} if for any two neighboring datasets $S,S'$ 
we have $\MMM(S)\approx_{(\eps,\delta)}\MMM(S')$.
\end{definition}

\paragraph{The Laplace mechanism.} The most basic constructions of differentially private algorithms are
via the Laplace mechanism as follows.

\begin{definition}
A random variable has probability distribution $\Lap(\gamma)$
if its probability density function is $f(x)=\frac{1}{2\gamma}\exp(-|x|/\gamma)$, where $x\in\R$.
\end{definition}

\begin{definition}[Sensitivity]
A function $f$ that maps datasets to the reals has {\em sensitivity $\Delta$} if for every two neighboring datasets $S$ and $S'$ it holds that $|f(S)-f(S')|\leq \Delta$.
\end{definition}

\begin{theorem}[The Laplace Mechanism \citep{DMNS06}]\label{thm:lap}
Let $f$ be a function that maps datasets to the reals with sensitivity $\Delta$. The mechanism $\AAA$ that on input $S$ adds noise with distribution $\Lap(\frac{\Delta}{\eps})$ to the output of $f(S)$ preserves $(\eps,0)$-differential privacy.
\end{theorem}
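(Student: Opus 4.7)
The plan is to prove the theorem by establishing a pointwise bound on the ratio of the output densities of $\AAA(S)$ and $\AAA(S')$, and then integrating that bound over an arbitrary measurable event. Write $\gamma = \Delta/\eps$, and let $p_S$ denote the density of $\AAA(S) = f(S) + Z$, where $Z \sim \Lap(\gamma)$. Then for any $z \in \R$,
\[
p_S(z) = \frac{1}{2\gamma} \exp\!\left(-\frac{|z - f(S)|}{\gamma}\right),
\]
and analogously for $p_{S'}(z)$.

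The first step is to bound the ratio $p_S(z)/p_{S'}(z)$ pointwise. I would compute
\[
\frac{p_S(z)}{p_{S'}(z)} = \exp\!\left(\frac{|z - f(S')| - |z - f(S)|}{\gamma}\right),
\]
and invoke the reverse triangle inequality to upper bound the numerator by $|f(S) - f(S')|$, which is at most $\Delta$ by the sensitivity hypothesis. Thus the ratio is at most $\exp(\Delta/\gamma) = e^{\eps}$. By symmetry the same bound holds for $p_{S'}(z)/p_S(z)$.

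The second step is to lift this pointwise bound to events. For any measurable $F \subseteq \R$,
\[
\Pr[\AAA(S) \in F] = \int_F p_S(z)\, dz \leq e^{\eps} \int_F p_{S'}(z)\, dz = e^{\eps} \Pr[\AAA(S') \in F],
\]
and the reverse direction is identical. This gives $\AAA(S) \approx_{(\eps,0)} \AAA(S')$ for every pair of neighboring datasets $S, S'$, which is exactly the $(\eps,0)$-DP condition from Definition~\ref{def:DP}.

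There is no real obstacle here: the whole argument is a one-line application of the triangle inequality to the Laplace density, followed by an integration. The only subtlety worth flagging is that the reverse triangle inequality is what lets the pointwise density ratio be controlled by the sensitivity $\Delta$ uniformly in $z$, which is precisely why the Laplace distribution (with its exponential tails in $|x|$) is the natural noise distribution for achieving pure differential privacy.
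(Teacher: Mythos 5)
Your proof is correct and is the canonical argument for the Laplace mechanism: bound the pointwise density ratio by $e^{\eps}$ via the triangle inequality and the sensitivity hypothesis, then integrate over events. The paper does not include its own proof of this theorem --- it cites it directly from \citep{DMNS06} as background material --- but your argument matches the standard proof found in that reference and in the differential privacy literature generally.
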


\paragraph{Joint differential privacy.} The standard definition of differential privacy (Definition~\ref{def:DP}) captures a setting in which
the entire output of the computation  may be publicly released without compromising privacy. While this is a very desirable requirement, it is sometimes too restrictive. Indeed, \cite{KearnsPRRU15} considered a relaxed setting in which we aim to analyze a dataset $S=(x_1,\dots,x_n)$, where every $x_i$ represents the information of user $i$, and to obtain a vector of  outcomes $(y_1,\dots,y_n)$. This vector, however, is not made public. Instead, every user $i$ only receives its ``corresponding outcome'' $y_i$. This setting potentially allows the outcome $y_i$ to strongly depend on the the input $x_i$, without compromising the privacy of the $i$th user from the view point of the other users.

\begin{definition}[\citep{KearnsPRRU15}]
Let $\MMM:X^n\rightarrow Y^n$ be a randomized algorithm that takes a dataset $S\in X^n$ and outputs a vector $\vec{y}\in Y^n$. Algorithm $\MMM$ satisfies $(\eps,\delta)$-joint differential privacy (JDP) if for every $i\in[n]$ and every two datasets $S,S'\in X^n$ differing only on their $i$th point it holds that 
$\MMM(S)_{-i}\approx_{(\eps,\delta)}\MMM(S')_{-i}$. Here $\MMM(S)_{-i}$ denotes the (random) vector of length $n-1$ obtained by running $(y_1,\dots,y_n)\leftarrow \MMM(S)$ and returning $(y_1,\dots,y_{i-1},y_{i+1},\dots,y_n)$.
\end{definition}

In words, consider an algorithm $\MMM$ that operates on the data of $n$ individuals and outputs $n$ outcomes $y_1,\dots,y_n$. This algorithm is JDP if changing only the $i$th input point $x_i$ has almost no affect on the outcome distribution of the {\em other} outputs (but the outcome distribution of $y_i$ is allowed to strongly depend on $x_i$).
\cite{KearnsPRRU15} showed that this setting fits a wide range of problems in economic environments. 

\begin{example}[\citep{nahmias2019privacy}] 
Suppose that a city water corporation is interested in promoting water conservation. To do so, the corporation decided to send each household a customized report indicating whether  their water consumption is above or below the median consumption in the neighborhood. Of course, this must be done in a way that protects the privacy of the neighbors. One way to tackle this would be to compute a privacy preserving estimation $z$ for the median consumption (satisfying Definition~\ref{def:DP}). Then, in each report, we could safely indicate whether the household's water consumption is bigger or smaller than $z$. While this solution is natural and intuitive, it turns out to be sub-optimal: We can obtain better utility by designing a JDP algorithm that directly computes a different outcome for each user (``above'' or ``below''), which is what we really aimed for, without going through a private median computation.
\end{example}

\paragraph{Algorithm \texttt{AboveThreshold}.}
Consider a large number of low sensitivity functions $f_1,f_2,\dots,f_T$ which are given (one by one) to a data curator (holding a dataset $S$). Algorithm \texttt{AboveThreshold}   allows for privately identifying the queries $f_i$ whose value $f_i(S)$ is (roughly) greater than some threshold $t$.

\begin{algorithm}[H]
\caption{\bf \texttt{AboveThreshold} \citep{DNRRV09,PMW-HR10}}\label{alg:AboveThreshold}
{\bf Input:} Dataset $S\in X^*$, privacy parameters $\eps,\delta$, threshold $t$, number of positive reports $r$, and an adaptively chosen stream of queries $f_i:X^*\rightarrow\R$ with sensitivity $\Delta$

\begin{enumerate}[topsep=-1pt,rightmargin=5pt,itemsep=-1pt]%
\item Denote $\gamma=O\left( \frac{\Delta}{\eps}\sqrt{r}\ln(\frac{r}{\delta}) \right)$
\item In each round $i$, when receiving a query $f_i\in Q$, do the following:
\begin{enumerate}[topsep=-3pt,rightmargin=5pt]%
\item Let $\hat{f_i}\leftarrow f_i(S)+\Lap(\gamma)$
\item If $\hat{f_i}\geq t$, then let $\sigma_i=1$ and otherwise let $\sigma_i=0$
\item Output $\sigma_i$
\item If $\sum_{j=1}^i \sigma_i\geq r$ then HALT
\end{enumerate}
\end{enumerate}
\end{algorithm}

Even though the number of possible rounds is unbounded, algorithm \texttt{AboveThreshold} preserves differential privacy. Note, however, that \texttt{AboveThreshold} is an {\em interactive} mechanism, while the standard definition of differential privacy (Definition~\ref{def:DP}) is stated for {\em non-interactive} mechanisms, that process their input dataset, release an output, and halt. The adaptation of DP to such interactive settings is done via a {\em game} between the (interactive) mechanism and an {\em adversary} that specifies the inputs to the mechanism and observes its outputs. Intuitively, the privacy requirement is that the view of the adversary at the end of the execution should be differentially private w.r.t.\ the inputs given to the mechanism. Formally,

\begin{definition}[DP under adaptive queries \citep{DMNS06,BunSU16}]\label{def:dpInteractiveQueries}
Let $\MMM$ be a mechanism that takes an input dataset and answers a sequence of adaptively chosen queries (specified by an adversary $\BBB$ and chosen from some family $Q$ of possible queries).
Mechanism $\MMM$ is $(\eps,\delta)$-differentially private if for every adversary $\BBB$ we have that $\texttt{AdaptiveQuery}_{\MMM,\BBB,Q}$ (defined below) is $(\eps,\delta)$-differentially private (w.r.t.\ its input bit $b$).
\end{definition}

\begin{algorithm*}[ht!]
\caption{\bf $\boldsymbol{\texttt{AdaptiveQuery}_{\MMM,\BBB,Q}}$ \citep{BunSU16}}\label{alg:adaptivealg}
{\bf Input:} A bit $b\in\{0,1\}$. (The bit $b$ is unknown to $\MMM$ and $\BBB$.)
\begin{enumerate}[topsep=-1pt,rightmargin=5pt,itemsep=-1pt]

\item The adversary $\BBB$ chooses two neighboring datasets $S_0$ and $S_1$.

\item The dataset $S_b$ is given to the mechanism $\MMM$.

\item For $i = 1,2,\dots$

\begin{enumerate}[topsep=-3pt,rightmargin=5pt]%
\item The adversary $\BBB$ chooses a query $q_i \in Q$.
\item The mechanism $\MMM$ is given $q_i$ and returns $a_i$.
\item $a_i$ is given to $\BBB$.
\end{enumerate}

\item When $\MMM$ or $\BBB$ halts, output $\BBB$'s view of the interaction, that is $(a_1,a_2,a_3,\cdots)$ and the internal randomness of $\BBB$.
\vspace{5px}
\end{enumerate}
\end{algorithm*}

\begin{theorem}[\citep{DNRRV09,PMW-HR10,KaplanMS21}]
Algorithm \texttt{AboveThreshold} is $(\eps,\delta)$-differentially private.
\end{theorem}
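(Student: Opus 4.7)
The plan is to establish privacy by first analyzing the special case $r=1$ and then lifting to general $r$ via advanced composition. Fix an adversary $\BBB$ and neighboring datasets $S_0, S_1$ that $\BBB$ chooses in step 1 of the \texttt{AdaptiveQuery} game. My goal is to show that the transcript $(\sigma_1, \sigma_2, \ldots)$ that $\BBB$ observes is $(\eps,\delta)$-close in distribution between the two choices of $b$.

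For the base case $r=1$, the transcript has the form $(0, 0, \ldots, 0, 1)$, halting at the first time $\tau^{*}$ with $\hat f_{\tau^{*}} \geq t$. The key tool is a coupling of the per-query Laplace noises under $S_0$ and $S_1$ so that all ``below threshold'' outputs coincide under both datasets. Because $f_i$ has sensitivity $\Delta$, such a coupling can be arranged on the event that no below-threshold noise lies within $O(\Delta)$ of the boundary $t - f_i(S_b)$. The only residual privacy cost then comes from the noise at the positive round $\tau^{*}$: a shift of $O(\Delta)$ in that noise variable suffices to explain the dataset change, and Laplace noise with scale $\gamma$ absorbs such a shift at cost $e^{O(\Delta/\gamma)}$. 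Setting $\eps_0 = \Theta(\Delta/\gamma)$ therefore yields $(\eps_0, 0)$-DP for the single-positive-report version.

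For general $r$, view the mechanism as $r$ sequential instances of the basic $r=1$ version, each restarted after issuing a positive report, with queries selected adaptively by $\BBB$ based on past outputs. Each instance is $(\eps_0, 0)$-DP by the previous step, so adaptive advanced composition yields a combined $(\eps,\delta)$-DP guarantee for $\eps = O\bigl(\eps_0 \sqrt{r \log(1/\delta)}\bigr)$. Solving for $\eps_0$ gives $\gamma = \Theta(\Delta/\eps_0) = \tilde{O}\bigl(\Delta \sqrt{r} \log(r/\delta) / \eps\bigr)$, matching the noise scale in the algorithm.

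The main obstacle is the adaptivity of the queries: since $f_{i+1}$ depends on all previous $\sigma_j$'s, the noise variables in different rounds are not independent of the queries, and a direct product-of-densities argument would fail. The classical resolution is the coupling argument sketched above, which is robust to adaptive queries because it first conditions on the realized transcript and only then bounds the density ratio round by round. A secondary subtlety, addressed by~\cite{KaplanMS21}, is that this version of \texttt{AboveThreshold} adds no independent noise to the threshold $t$; their refined analysis shows that this simplification does not degrade the privacy guarantee by more than a constant factor, which is what enables the stated noise scale without an additional threshold-noise term.
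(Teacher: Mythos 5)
There is a genuine gap in your $r=1$ base case. You claim the single-positive-report version satisfies pure $(\eps_0,0)$-DP via a coupling that makes all below-threshold outputs coincide ``on the event that no below-threshold noise lies within $O(\Delta)$ of the boundary,'' but conditioning on a nontrivial good event does not yield pure DP, and pure DP is in fact provably false for this version of \texttt{AboveThreshold} precisely because no noise is added to the threshold $t$. To see this, take $T$ identical queries with $f(S_0)=t-\Delta$ and $f(S_1)=t$: the probability of the all-zero transcript under $S_0$ exceeds that under $S_1$ by a factor $\left(\Pr[\eta<\Delta]/\Pr[\eta<0]\right)^T$, which is unbounded as $T\to\infty$. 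The classical coupling you invoke works for the variant that adds a shared threshold noise $\rho$, because a single $\Delta$-shift of $\rho$ covers \emph{all} below-threshold rounds simultaneously at a one-time privacy cost; here there is no shared variable, and shifting each per-query noise individually incurs cost in every round. So the base case is at best $(\eps_0,\delta_0)$-DP, and bounding $\delta_0$ uniformly over adaptively chosen below-threshold rounds is precisely the nontrivial content of the KMS analysis you cite---not a constant-factor refinement that can be waved in at the end.

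Consequently the composition step also needs repair: with an approximate-DP base case, advanced composition over $r$ restarts gives roughly $\bigl(O(\eps_0\sqrt{r\log(1/\delta')}),\; r\delta_0+\delta'\bigr)$-DP, and matching the algorithm's noise scale $\gamma=O\bigl((\Delta/\eps)\sqrt{r}\ln(r/\delta)\bigr)$---which carries a full $\ln(r/\delta)$ factor rather than the $\sqrt{\log(1/\delta)}$ that composition alone produces---requires tracking how $\delta_0$ depends on $\gamma$. Your accounting does not close that loop.
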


\paragraph{A private counter.}
In the setting of algorithm \texttt{AboveThreshold}, the dataset is fixed in the beginning of the execution, and the queries arrive sequentially one by one. \cite{DworkNPR10} and \cite{ChanSS10} considered a different setting, in which the {\em data} arrives sequentially. In particular, they considered the {\em counter} problem where in every time step $i\in[T]$ we obtain an input bit $x_i\in\{0,1\}$ (representing the data of user $i$) and must  immediately respond with an approximation for the current sum of the bits. That is, at time $i$ we wish to release an approximation for $x_1+x_2+\dots+x_i$.  

Similarly to our previous discussion, this is an {\em interactive} setting, and privacy is defined via a {\em game} between a mechanism $\MMM$ and an adversary $\BBB$ that adaptively determines the inputs for the mechanism.

\begin{definition}[DP under adaptive inputs \citep{DMNS06,DworkNPR10,ChanSS10,KaplanMS21,JainRSS22}]\label{def:dpInteractiveInputs}
Let $\MMM$ be a mechanism that in every round $i$ obtains an input point $x_i$ (representing the information of user $i$) and outputs a response $a_i$. 
Mechanism $\MMM$ is $(\eps,\delta)$-differentially private if for every adversary $\BBB$ we have that $\texttt{AdaptiveInput}_{\MMM,\BBB}$ (defined below) is $(\eps,\delta)$-differentially private (w.r.t.\ its input bit $b$).
\end{definition}

\begin{algorithm*}[ht!]
\caption{\bf $\boldsymbol{\texttt{AdaptiveInput}_{\MMM,\BBB}}$ \citep{JainRSS22}}
{\bf Input:} A bit $b\in\{0,1\}$. (The bit $b$ is unknown to $\MMM$ and $\BBB$.)
\begin{enumerate}[topsep=-1pt,rightmargin=5pt,itemsep=-1pt]

\item For $i = 1,2,\dots$

\begin{enumerate}[topsep=-3pt,rightmargin=5pt]%

\item The adversary $\BBB$ outputs a bit $c_i\in\{0,1\}$, under the restriction that $\sum_{j=1}^i c_j\leq 1$.\\
{\small \gray{\% The round $i$ in which $c_i=1$ is called the {\em challenge} round. Note that there could be at most one challenge round throughout the game.}}

\item The adversary $\BBB$ chooses two input points $x_{i,0}$ and $x_{i,1}$, under the restriction that if $c_i=0$ then $x_{i,0}=x_{i,1}$.

\item Algorithm $\MMM$ obtains $x_{i,b}$ and outputs $a_i$.

\item $a_i$ is given to $\BBB$.
\end{enumerate}

\item When $\MMM$ or $\BBB$ halts, output $\BBB$'s view of the interaction, that is $(a_1,a_2,a_3,\cdots)$ and the internal randomness of $\BBB$.
\vspace{5px}
\end{enumerate}
\end{algorithm*}

\begin{theorem}[Private counter \citep{DworkNPR10,ChanSS10,JainRSS22}]\label{thm:counter}
There exists a mechanism $\MMM$ that in each round $i\in[T]$ obtains an input bit $x_i\in\{0,1\}$ and outputs a response $a_i\in\N$ with the following properties:
\begin{enumerate}
    \item $\MMM$ is $(\eps,0)$-differentially private (as in Definition~\ref{def:dpInteractiveInputs}).
    \item Let $s$ denote the random coins of $\MMM$. Then there exists an event $E$ such that: (1) $\Pr[s\in E]\geq 1-\beta$, and (2) Conditioned on every $s\in E$, for {\em every} input sequence $(x_1,\dots,x_T)$, the answers $(a_1,\dots,a_T)$ satisfy $$\left| a_i-\sum_{j=1}^i x_i\right|\leq O\left(\frac{1}{\eps}\log(T)\log\left(\frac{T}{\beta}\right)\right).$$
\end{enumerate}
\end{theorem}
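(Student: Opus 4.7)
The plan is to instantiate the classical \emph{binary tree mechanism} of \cite{DworkNPR10,ChanSS10}. I build a complete binary tree of depth $d=\lceil \log_2 T\rceil$ whose $T$ leaves are indexed by rounds $1,\ldots,T$. For each node $v$, I sample once and for all a noise value $\eta_v\sim\Lap(d/\eps)$, and I maintain a running (noiseless) count $c_v$ of the bits that have arrived at leaves in $v$'s subtree. Upon receiving $x_i$, I add $x_i$ to $c_v$ for each of the $d$ ancestors of leaf $i$, and then output
\[
a_i \;=\; \sum_{v\in \DDD_i}(c_v+\eta_v),
\]
where $\DDD_i$ is the canonical dyadic decomposition of the prefix $[1,i]$ into at most $d$ subtree-ranges.

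For privacy under Definition~\ref{def:dpInteractiveInputs}, note that at most one round is a ``challenge'' round, so the two input sequences induced by $b\in\{0,1\}$ differ in a single coordinate $x_{i^\star}$. Altering this bit changes exactly the $d$ counters $c_v$ on the root-to-leaf path of $i^\star$, each by at most $1$. Since the noises $\{\eta_v\}$ are drawn independently of the input stream, the adversary's full view $(a_1,\ldots,a_T)$ is a deterministic post-processing of the static noisy vector $\{c_v+\eta_v\}_v$ together with $\BBB$'s coins. Basic composition of Laplace mechanisms over the $d$ affected coordinates, each protected by noise of scale $d/\eps$, therefore yields overall $(\eps,0)$-DP; adaptivity of $\BBB$ is absorbed by the post-processing step since no noise is injected online.

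For accuracy, the error at round $i$ equals $E_i:=\sum_{v\in\DDD_i}\eta_v$, a sum of at most $d$ independent $\Lap(d/\eps)$ variables. A sub-exponential (Bernstein-type) tail bound for sums of Laplace random variables gives
\[
\Pr\!\left[|E_i| > C\cdot\tfrac{d}{\eps}\cdot\log(T/\beta)\right] \;\leq\; \beta/T
\]
for an appropriate absolute constant $C$. Let $E$ be the intersection, over $i\in[T]$, of these good events; by a union bound $\Pr[E]\geq 1-\beta$. Crucially $E$ is an event depending only on the noises $\{\eta_v\}$, which are independent of the inputs, so conditioning on any realization $s\in E$ yields the stated uniform bound simultaneously for \emph{every} input sequence, matching both parts of the theorem.

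The only mildly delicate step is the concentration inequality: a naive per-node union bound already produces the right scaling up to an extra $\log T$ factor, and shaving that factor to match the claimed $\log T\cdot \log(T/\beta)$ bound requires concentration of the sum $E_i$ rather than a worst-case bound on each summand. The adaptivity of the inputs is handled essentially for free, because the noise is generated obliviously in one shot, reducing the interactive game to a static post-processing argument.
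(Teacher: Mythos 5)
The paper does not supply its own proof of Theorem~\ref{thm:counter}; it is stated as a cited result. Your reconstruction via the binary tree mechanism is indeed the construction underlying all three cited works, and the overall structure is right: noise of scale $d/\eps$ at each of the $2T-1$ nodes with $d=\lceil\log_2 T\rceil$, output the sum over the dyadic decomposition, and argue privacy via the fact that a single bit touches only the $d$ counts on its root--leaf path. Your accuracy analysis is also correct in its key structural observation: the event $E$ you define depends only on $\{\eta_v\}$, which is sampled obliviously, so conditioning on $E$ gives a bound simultaneously for all inputs; and you are right that a per-node union bound loses a $\log T$ factor, whereas concentration of the length-$\le d$ Laplace sum recovers a bound of order $\frac{d^{3/2}}{\eps}\sqrt{\log(T/\beta)}$, which is at most the stated $\frac{d}{\eps}\log(T/\beta)$ since $\log(T/\beta)\ge \log T \approx d$ for any $\beta\le 1$.

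The one place your argument is genuinely imprecise is the privacy step, specifically the sentence claiming the adversary's view is ``a deterministic post-processing of the static noisy vector $\{c_v+\eta_v\}_v$.'' That vector is not static when the adversary is adaptive: the counts $c_v$ are functions of future inputs, and those inputs are chosen by $\BBB$ as a function of earlier outputs $a_j$, which themselves depend on $b$ and on the noise. So one cannot simply apply the Laplace $L^1$-sensitivity argument to a fixed histogram and then post-process. The correct route is a sequential (hybrid) argument over the nodes in order of completion time $r_v$: conditioned on the transcript so far, each newly-completed $\tilde c_v=c_v+\eta_v$ has sensitivity $1$ in $b$ if $i^\star$ lies in $v$'s subtree and $0$ otherwise, and the $d$ relevant noises are fresh at that point because $r_v\ge i^\star$; adaptive basic composition then yields $(\eps,0)$-DP. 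This is exactly the gap that \cite{JainRSS22} close---\cite{DworkNPR10,ChanSS10} establish privacy only against oblivious streams, which is why the paper cites all three. Your proof is salvageable (and the ingredients you name, noise-sampled-upfront plus composition, are the right ones), but as written it elides the step that is in fact the nontrivial content of the adaptive version of the theorem.
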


\section{Challenge Differential Privacy}\label{sec:challengeDP}

We now introduce the privacy definition we consider in this work is. Intuitively, the requirement is that even an adaptive adversary controlling all of the users except Alice, cannot learn much information about the interaction Alice had with the algorithm.  

\begin{definition}\label{def:CDP}
Consider an algorithm $\MMM$ that, in each round $i\in[T]$ obtains an input point $x_i$, outputs a ``predicted'' label $\hat{y}_i$, and obtains a ``true'' label $y_i$. We say that algorithm $\MMM$ is {\em $(\eps,\delta)$-challenge differentially private} if for any adversary $\BBB$ we have that  $\texttt{OnlineGame}_{\MMM,\BBB,T}$, defined below, is $(\eps,\delta)$-differentially private (w.r.t.\ its input bit $b$).
\end{definition}

\begin{remark}
For readability, we have simplified Definition~\ref{def:CDP} and tailored it to the setting of online learning. Our algorithms satisfy a stronger variant of the definition, in which the adversary may adaptively choose the ``true'' labels $y_i$ also based on the ``predicted'' labels $\hat{y}_i$. See Appendix~\ref{sec:generalDef} for the generalized definition.
\end{remark}

\begin{algorithm*}[ht!]
\caption{\bf $\boldsymbol{\texttt{OnlineGame}_{\MMM,\BBB,T,g}}$}\label{alg:Game}

{\bf Setting:} $T\in\N$ denotes the number of rounds and $g\in\N$ is a ``group privacy'' parameter. If not  explicitly stated  we assume that $g=1$. $\MMM$ is an online algorithm and $\BBB$ is an adversary that determines the inputs adaptively.

{\bf Input of the game:} A bit $b\in\{0,1\}$. (The bit $b$ is unknown to $\MMM$ and $\BBB$.)
\begin{enumerate}[topsep=-1pt,rightmargin=5pt,itemsep=-1pt]

\item For $i = 1,2,\dots,T$

\begin{enumerate}[topsep=-3pt,rightmargin=5pt]%

\item The adversary $\BBB$ outputs a bit $c_i\in\{0,1\}$, under the restriction that $\sum_{j=1}^i c_j\leq g$.\\
{\small \gray{\% We interpret rounds $i$ in which $c_i=1$ as {\em challenge} rounds. Note that there could be at most $g$ challenge rounds throughout the game.}}

\item The adversary $\BBB$ chooses two labeled inputs $(x_{i,0},y_{i,0})$ and $(x_{i,1},y_{i,1})$, under the restriction that if $c_i=0$ then $(x_{i,0},y_{i,0})=(x_{i,1},y_{i,1})$.

\item Algorithm $\MMM$ obtains $x_{i,b}$, then outputs $\hat{y}_i$, and then obtains $y_{i,b}$.

\item If $c_i=0$ then set $\tilde{y}_i=\hat{y}_i$. Otherwise set $\tilde{y}_i=\bot$.

\item The adversary $\BBB$ obtains $\tilde{y}_i$.\\
{\small \gray{\% Note that the adversary $\BBB$ does not get to see the outputs of $\MMM$ in challenge rounds.}}
\end{enumerate}

\item Output $\BBB$'s view of the game, that is $\tilde{y}_1,\dots,\tilde{y}_T$ and the internal randomness of $\BBB$.\\
{\small \gray{\% Note that from this we can reconstruct all the input points  $x_{i,0},x_{i,1}$ specified by $\BBB$ throughout the game.}}
\vspace{5px}
\end{enumerate}

\end{algorithm*}

\paragraph{Composition and post-processing.}
Composition and post-processing for challenge-DP follows immediately from their analogues for  (standard) DP. Formally, composition is defined via the following game, called $\texttt{CompositionGame}$, in which a ``meta adversary'' $\BBB^*$ is trying  to guess an unknown bit $b\in\{0,1\}$.
The meta adversary $\BBB^*$ is allowed to (adaptively) invoke $k$  executions of the game specified in Algorithm~\ref{alg:Game}, where all of these $k$ executions are done with the same (unknown) bit $b$. See Algorithm~\ref{alg:CompositionGame}. %
The following theorem follows immediately from standard composition theorems for differential privacy \citep{DRV10}.

\begin{algorithm*}[ht!]
\caption{\bf $\boldsymbol{\texttt{CompositionGame}_{\BBB^*,m,\eps,\delta}}$}\label{alg:CompositionGame}

{\bf Input of the game:} A bit $b\in\{0,1\}$. (The bit $b$ is unknown to $\BBB^*$.)
\begin{enumerate}[topsep=-1pt,rightmargin=5pt,itemsep=-1pt]

\item For $\ell = 1,2,\dots,m$

\begin{enumerate}[topsep=-3pt,rightmargin=5pt]%

\item The adversary $\BBB^*$ outputs an $(\eps,\delta)$-challenge-DP algorithm $\MMM_{\ell}$, an adversary $\BBB_{\ell}$, and an integer $T_{\ell}$.

\item The adversary $\BBB^*$ obtains the outcome of $\texttt{OnlineGame}_{\MMM_{\ell},\BBB_{\ell},T_{\ell}}(b)$.

\end{enumerate}

\item Output $\BBB^*$'s view of the game (its internal randomness and all of the outcomes of $\texttt{OnlineGame}$ it obtained throughout the execution).
\vspace{5px}
\end{enumerate}

\end{algorithm*}

\begin{theorem}[special case of \citep{DRV10}]\label{thm:composition}
For every $\BBB^*$, every $m\in\N$ and every $\eps,\delta,\delta'\geq0$ it holds that $\texttt{CompositionGame}_{\BBB^*,m,\eps,\delta}$ 
is $(\eps', m\delta+\delta')$-differentially private (w.r.t.\  the input bit $b$) for
$$
\eps'=\sqrt{2m \ln(1/\delta')}\eps + m\eps(e^{\eps}-1).
$$
\end{theorem}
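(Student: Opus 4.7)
The plan is to reduce this to the advanced composition theorem for standard differential privacy. The key observation is that the definition of challenge-DP (Definition~\ref{def:CDP}) is itself phrased as a condition of standard differential privacy: for every $(\eps,\delta)$-challenge-DP mechanism $\MMM$, every adversary $\BBB$, and every $T\in\N$, the mapping $b\mapsto\texttt{OnlineGame}_{\MMM,\BBB,T}(b)$ is, by definition, an $(\eps,\delta)$-differentially private randomized function of the input bit $b\in\{0,1\}$ (where the two ``neighboring datasets'' are simply $b=0$ and $b=1$). From this viewpoint, each round $\ell$ of the composition game is the invocation of a single $(\eps,\delta)$-DP mechanism $f_\ell(b) := \texttt{OnlineGame}_{\MMM_\ell,\BBB_\ell,T_\ell}(b)$ on the fixed ``database'' $b$.

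Concretely, I would first reformulate $\texttt{CompositionGame}_{\BBB^*,m,\eps,\delta}$ as the standard adaptive composition experiment: an adversary $\BBB^*$ is allowed, in each of $m$ rounds, to choose an $(\eps,\delta)$-DP mechanism $f_\ell$ based on the outputs of the previous mechanisms $f_1(b),\ldots,f_{\ell-1}(b)$, and then to receive $f_\ell(b)$. The final view of $\BBB^*$ is its internal randomness together with $(f_1(b),\ldots,f_m(b))$. This matches, syntactically, the adaptive composition model considered by Dwork, Rothblum, and Vadhan, with the choice of $f_\ell$ encapsulating the triple $(\MMM_\ell,\BBB_\ell,T_\ell)$.

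Second, I would invoke the advanced composition theorem, which states that the adaptive composition of $m$ mechanisms each of which is $(\eps,\delta)$-DP is itself $(\eps',m\delta+\delta')$-DP for
\[
\eps' \;=\; \sqrt{2m\ln(1/\delta')}\cdot\eps + m\eps(e^{\eps}-1).
\]
Applying this bound to the adaptively chosen sequence $(f_1,\ldots,f_m)$ yields that the view of $\BBB^*$ at the end of $\texttt{CompositionGame}$ is $(\eps',m\delta+\delta')$-DP with respect to $b$, which is the conclusion of the theorem.

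I do not expect any serious obstacle here, since the work was already done in setting up Definition~\ref{def:CDP} as standard DP of the transcript with respect to $b$. The only thing worth being careful about is that the advanced composition theorem truly allows \emph{adaptive} selection of each $(\eps,\delta)$-DP mechanism as a function of previous outputs (which it does, in its standard statement), so that $\BBB^*$'s ability to choose $(\MMM_\ell,\BBB_\ell,T_\ell)$ based on the outcomes of prior rounds does not violate the hypothesis. Post-processing, which is implicit in extracting $\BBB^*$'s view from the sequence of transcripts, is of course preserved by DP and contributes no additional loss.
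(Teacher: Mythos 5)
Your proposal is correct and matches the paper's (implicit) reasoning exactly: the paper simply asserts that this theorem ``follows immediately from standard composition theorems for differential privacy,'' and your argument spells out precisely the reduction---viewing each round of $\texttt{CompositionGame}$ as an adaptively chosen $(\eps,\delta)$-DP mechanism $f_\ell(b)=\texttt{OnlineGame}_{\MMM_\ell,\BBB_\ell,T_\ell}(b)$ acting on the one-bit ``database'' $b$---and then invokes advanced composition of \citet{DRV10}. The observation that Definition~\ref{def:CDP} already packages challenge-DP as standard DP of the transcript w.r.t.\ $b$ is exactly the right lever, and you correctly note that the adaptive choice of $(\MMM_\ell,\BBB_\ell,T_\ell)$ is covered by the adaptive form of the composition theorem.
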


\paragraph{Group privacy.} We show that challenge-DP is closed under group privacy. This is more subtle than the composition argument. In fact, we first need to {\em define} what do we mean by ``group privacy'' in the context of challenge-DP. This is done using the parameter $g$ in algorithm \texttt{OnlineGame}.

\begin{theorem}\label{thm:groupprivacy}
Let $\MMM$ be an algorithm that in each round $i\in[T]$ obtains an input point $x_i$, outputs a ``predicted'' label $\hat{y}_i$, and obtains a ``true'' label $y_i$. If $\MMM$ is $(\eps,\delta)$-challenge-DP then for every $g\in\N$ and every adversary $\BBB$ (posing at most $g$ challenges) we have that $\texttt{OnlineGame}_{\MMM,\BBB,T,g}$ is $(g\eps, g\cdot e^{\eps g}\cdot\delta)$-differentially private. 
\end{theorem}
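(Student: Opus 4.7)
The plan is a standard hybrid-argument reduction from the $g$-challenge game to the single-challenge (definition of challenge-DP) game. For $j \in \{0,1,\ldots,g\}$, define the hybrid $H_j$ to be the distribution of $\BBB$'s view in $\texttt{OnlineGame}_{\MMM,\BBB,T,g}$ when the first $j$ challenge rounds are resolved using side $b=1$ and the remaining $g-j$ challenge rounds are resolved using side $b=0$. Then $H_0$ is exactly $\texttt{OnlineGame}_{\MMM,\BBB,T,g}(0)$ and $H_g$ is exactly $\texttt{OnlineGame}_{\MMM,\BBB,T,g}(1)$, so it suffices to show $H_{j-1} \approx_{(\eps,\delta)} H_j$ for each $j \in [g]$ and then chain the $g$ indistinguishabilities together.

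To prove the single-step bound, I would construct, for each fixed $j$, a single-challenge adversary $\BBB'_j$ that internally simulates $\BBB$ while ``promoting'' only $\BBB$'s $j$-th challenge to an actual challenge of its own game. Concretely, in each round $\BBB'_j$ runs one step of $\BBB$ to obtain $c_i, (x_{i,0}, y_{i,0}), (x_{i,1}, y_{i,1})$ and then: (i) if $c_i = 0$, $\BBB'_j$ outputs $c'_i=0$ and forwards the (committed) pair unchanged; (ii) if $c_i=1$ and this is one of $\BBB$'s first $j-1$ challenges, $\BBB'_j$ outputs $c'_i=0$ with the pre-committed point $(x_{i,1},y_{i,1})$; (iii) if $c_i=1$ and this is after $\BBB$'s $j$-th challenge, $\BBB'_j$ outputs $c'_i=0$ with $(x_{i,0},y_{i,0})$; (iv) if this is $\BBB$'s $j$-th challenge, $\BBB'_j$ outputs $c'_i=1$ together with both points $(x_{i,0},y_{i,0}),(x_{i,1},y_{i,1})$. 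After receiving $\tilde{y}'_i$ from its own game, $\BBB'_j$ feeds $\BBB$ the value $\bot$ whenever $c_i=1$ and the value $\tilde{y}'_i$ otherwise, so that the simulated $\BBB$ sees a transcript of exactly the form it expects in $\texttt{OnlineGame}_{\MMM,\BBB,T,g}$.

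By construction, the sequence of inputs actually received by $\MMM$ inside $\texttt{OnlineGame}_{\MMM,\BBB'_j,T,1}(b)$ matches, round for round, the input sequence $\MMM$ would receive in hybrid $H_{j-1+b}$, and the view of the simulated $\BBB$ is a deterministic (post-processing) function of $\BBB'_j$'s view. Invoking $(\eps,\delta)$-challenge-DP of $\MMM$ on the adversary $\BBB'_j$ therefore yields
\[
H_{j-1} \;\approx_{(\eps,\delta)}\; H_j.
\]

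Finally, chaining these $g$ bounds via the standard group-privacy triangle inequality gives, for every event $F$,
\[
\Pr[H_0 \in F] \;\leq\; e^{g\eps}\Pr[H_g \in F] \;+\; \delta\bigl(1 + e^\eps + \cdots + e^{(g-1)\eps}\bigr),
\]
and symmetrically in the other direction; since $1 + e^\eps + \cdots + e^{(g-1)\eps} \leq g\cdot e^{g\eps}$, this is precisely $(g\eps,\, g\cdot e^{g\eps}\cdot\delta)$-indistinguishability, as required. The main subtlety, and the only step that deserves care, is verifying that the simulated $\BBB$'s view in $\BBB'_j$'s game is distributed identically to $\BBB$'s view in the hybrid $H_{j-1+b}$: because $\BBB$ is adaptive, one must check that the $\tilde{y}_i$'s fed back to $\BBB$ during the simulation are jointly distributed as in the hybrid. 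This holds because on non-challenge rounds of $\BBB$ the mechanism's input is committed to the same value in both worlds, while on every $\BBB$-challenge round the simulated $\BBB$ receives $\bot$ regardless, so its next action has the correct conditional distribution.
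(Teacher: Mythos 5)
Your proof is correct and follows essentially the same route as the paper: the same hybrid sequence $H_0,\dots,H_g$ (the paper's $\WWW_0,\dots,\WWW_g$), the same single-challenge simulating adversary that commits the first challenges to side $1$, promotes the $j$th, and commits the rest to side $0$, the same post-processing step to pass from the simulator's view to the simulated $\BBB$'s view, and the same chaining bound $1+e^\eps+\cdots+e^{(g-1)\eps}\le g\,e^{g\eps}$. No gaps.
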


\begin{proof}
Fix $g\in\N$ and fix an adversary $\BBB$ (that poses at most $g$ challenge rounds). We consider a sequence of games $\WWW_0,\WWW_1,\dots,\WWW_g$, where $\WWW_{\ell}$ is defined as follows.
\begin{enumerate}
    \item Initialize algorithm $\MMM$ and the adversary $\BBB$.
    \item For round $i=1,2,\dots,T$:
    \begin{enumerate}[leftmargin=23pt]
        \item Obtain a challenge indicator $c_i$ and two labeled inputs $(x_{i,0},y_{i,0})$ and $(x_{i,1},y_{i,1})$ from $\BBB$.
        \item If $\sum_{j=1}^i c_j > \ell$ then set $(w_i,z_i)=(x_{i,0},y_{i,0})$. Otherwise set $(w_i,z_i)=(x_{i,1},y_{i,1})$.
        \item Feed $w_i$ to algorithm $\MMM$, obtain an outcome $\hat{y}_i$, and feed it $z_i$.
        \item If $c_i=0$ then set $\tilde{y}_i=\hat{y}_i$. Otherwise set $\tilde{y}_i=\bot$.
        \item Give $\tilde{y}_i$ to $\BBB$.
    \end{enumerate}
    \item Output $\tilde{y}_1,\dots,\tilde{y}_T$ and the internal randomness of $\BBB$.
\end{enumerate}

That is, $\WWW_{\ell}$ simulates the online game between $\MMM$ and $\BBB$, where during the first $\ell$ challenge rounds algorithm $\MMM$ is given $(x_{i,1},y_{i,1})$, and in the rest of the challenge rounds algorithm $\MMM$ is given $(x_{i,0},y_{i,0})$. 
Note that 
$$\texttt{OnlineGame}_{\MMM,\BBB,T,g}(0)\equiv\WWW_0
\qquad\text{and}\qquad
\texttt{OnlineGame}_{\MMM,\BBB,T,g}(1)\equiv\WWW_g.
$$
We claim that for every $0<\ell\leq g$ it holds that $\WWW_{\ell-1} \approx_{(\eps,\delta)}\WWW_{\ell}$. To this end, fix $0<\ell\leq g$ and consider an adversary $\widehat{\BBB}$, that poses at most one challenge, defined as follows. 
Algorithm $\widehat{\BBB}$ runs $\BBB$ internally. In every round $i$, algorithm $\widehat{\BBB}$ obtains from $\BBB$ a challenge bit $c_i$ and two labeled inputs $(x_{i,0},y_{i,0})$ and $(x_{i,1},y_{i,1})$. As long as $\BBB$ did not pose its $\ell$th challenge, algorithm $\widehat{\BBB}$ outputs $(x_{i,1},y_{i,1}),(x_{i,1},y_{i,1})$. During the round $i$ in which $\BBB$ poses its $\ell$th challenge, algorithm $\BBB$ outputs $(x_{i,0},y_{i,0}),(x_{i,1},y_{i,1})$. This is the challenge round posed by algorithm $\widehat{\BBB}$. In every round $t$ afterwards, algorithm $\widehat{\BBB}$ outputs $(x_{i,0},y_{i,0}),(x_{i,0},y_{i,0})$. When algorithm $\widehat{\BBB}$ obtains an answer $\tilde{y}_i$ it sets $\dbtilde{y}_i =
\begin{cases}
\tilde{y}_i, \text{ if } c_i=0\\
\bot, \text{ if } c_i=1
\end{cases}
$ and gives $\dbtilde{y}_i$ to algorithm $\BBB$.

As $\widehat{\BBB}$ is an adversary that poses (at most) one challenge, by the privacy properties of $\MMM$ we know that $\texttt{OnlineGame}_{\MMM,\widehat{\BBB},T}$ is $(\eps,\delta)$-DP. 
Recall that the output of $\texttt{OnlineGame}_{\MMM,\widehat{\BBB},T}$ includes all of the randomness of $\widehat{\BBB}$, as well as the answers $\tilde{y}_t$ generated throughout the game. This includes the randomness of $\BBB$ (which $\widehat{\BBB}$ runs internally), and hence, determines also all of the $\dbtilde{y}_i$'s defined by $\widehat{\BBB}$ throughout the interaction. Let $P$ be a post-processing procedure that takes the output of $\texttt{OnlineGame}_{\MMM,\widehat{\BBB},T}$ and returns the randomness of $\BBB$ as well as $(\dbtilde{y}_1,\dots,\dbtilde{y}_T)$. By closure of DP to post-processing, we have that
$$
P(\texttt{OnlineGame}_{\MMM,\widehat{\BBB},T}(0))
\approx_{(\eps,\delta)}
P(\texttt{OnlineGame}_{\MMM,\widehat{\BBB},T}(1)).
$$
Now note that 
$$
P(\texttt{OnlineGame}_{\MMM,\widehat{\BBB},T}(0))
\equiv
\WWW_{\ell-1}
\qquad\text{and}\qquad
P(\texttt{OnlineGame}_{\MMM,\widehat{\BBB},T}(1))\equiv
\WWW_{\ell},
$$
and hence 
$\WWW_{\ell-1}\approx_{(\eps,\delta)}\WWW_{\ell}$. Overall we have that
$$
\texttt{OnlineGame}_{\AAA,\BBB,T,g}(0)
\equiv
\WWW_0
\approx_{(\eps,\delta)}
\WWW_1
\approx_{(\eps,\delta)}
\WWW_2
\approx_{(\eps,\delta)}
\dots
\approx_{(\eps,\delta)}
\WWW_g
\equiv
\texttt{OnlineGame}_{\AAA,\BBB,T,g}(1).
$$
This shows that  $\texttt{OnlineGame}_{\AAA,\BBB,T,g}$ is $(g\eps, g\cdot e^{\eps g}\cdot\delta)$-differentially private, thereby completing the proof.
\end{proof}

\section{Online Classification under Challenge Differential Privacy}

Towards presenting our private online learner, we introduce a variant of algorithm \texttt{AboveThreshold} with additional guarantees, which we call \texttt{ChallengeAT}. Recall that \texttt{AboveThreshold} ``hides'' arbitrary modifications to a single input point. Intuitively, the new variant we present aims to hide both an arbitrary modification to a single input point and an arbitrary modification to a single query throughout the execution. Consider algorithm \texttt{ChallengeAT}.

\begin{algorithm}
\caption{\bf \texttt{ChallengeAT}}
{\bf Input:} Dataset $S\in X^*$, privacy parameters $\eps,\delta$, threshold $t$, number of positive reports $r$, and an adaptively chosen stream of queries $f_i:X^*\rightarrow\R$ each with sensitivity $\Delta$

{\bf Tool used:} An $(\eps,0)$-DP algorithm, \texttt{PrivateCounter}, for counting bits under continual observation, guaranteeing error at most $\lambda$ with probability at least $1-\delta$
\begin{enumerate}[topsep=-1pt,rightmargin=5pt,itemsep=-1pt]%
\item Instantiate \texttt{PrivateCounter}
\item Denote $\gamma=O\left( \frac{\Delta}{\eps}\sqrt{r+\lambda}\ln(\frac{r+\lambda}{\delta}) \right)$
\item In each round $i$, when receiving a query $f_i$, do the following:
\begin{enumerate}[topsep=-3pt,rightmargin=5pt]%
\item Let $\hat{f_i}\leftarrow f_i(S)+\Lap(\gamma)$
\item If $\hat{f_i}\geq t$, then let $\sigma_i=1$ and otherwise let $\sigma_i=0$
\item Output $\sigma_i$
\item\label{step:FeedCounter} Feed $\sigma_i$ to \texttt{PrivateCounter} and let ${\rm count}_i$ denote its current output
\item If ${\rm count}_i\geq r$ then HALT
\end{enumerate}
\end{enumerate}
\end{algorithm}

\begin{remark}
When we apply \texttt{ChallengeAT}, it sets
$\lambda=O\left(\frac{1}{\eps}\log(T)\log\left(\frac{T}{\beta}\right)\right)$. Technically, for this it has to know $T$ and $\beta$. To simplify the description this is not explicit in our algorithms.    
\end{remark}

The utility guarantees of \texttt{ChallengeAT} are straightforward. The following theorem follows by bounding (w.h.p.) all the noises sampled throughout the execution (when instantiating \texttt{ChallengeAT} with the private counter from Theorem~\ref{thm:counter}).\footnote{The event $E$ occurs when all the Laplace noises of the counter and \texttt{ChallengeAT} are within a factor of $\log(T/\beta)$ of their expectation.}

\begin{theorem}\label{thm:ChallengeAT}
Let $s$ denote the random coins of \texttt{ChallengeAT}. Then there exists an event $E$ such that: (1) $\Pr[s\in E]\geq 1-\beta$, and (2) Conditioned on every $s\in E$, for {\em every} input dataset $S$ and {\em every} sequence of $T$ queries $(f_1,\dots,f_T)$ it holds that
\begin{enumerate}
    \item Algorithm \texttt{ChallengeAT} does not halt before the $r$th time in which it outputs $\sigma_i=1$.
    \item For every $i$ such that $\sigma_i=1$ it holds that $f_i(S)\geq t - O\left( \frac{\Delta}{\eps}\sqrt{r+\lambda}\ln(\frac{r+\lambda}{\delta})\log(\frac{T}{\beta}) \right)$
    \item For every $i$ such that $\sigma_i=0$ it holds that $f_i(S)\leq t + O\left( \frac{\Delta}{\eps}\sqrt{r+\lambda}\ln(\frac{r+\lambda}{\delta})\log(\frac{T}{\beta}) \right)$
\end{enumerate}
where $\lambda=O\left(\frac{1}{\eps}\log(T)\log\left(\frac{T}{\beta}\right)\right)$ is the error of the counter of Theorem~\ref{thm:counter}.
\end{theorem}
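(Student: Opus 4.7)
The plan is to construct an event $E$ that depends only on the internal randomness $s$ of \texttt{ChallengeAT} (and not on the adaptively chosen inputs), and to then read off the three utility guarantees deterministically whenever $s\in E$. Concretely, I would set $E=E_1\cap E_2$, where $E_1$ is the high-probability accuracy event of the instantiated \texttt{PrivateCounter} — namely $|{\rm count}_i-\sum_{j\leq i}\sigma_j|\leq\lambda$ for every $i\in[T]$, which by Theorem~\ref{thm:counter} (run with confidence $\beta/2$) has probability at least $1-\beta/2$ and gives $\lambda=O(\frac{1}{\eps}\log(T)\log(T/\beta))$ — and $E_2$ is the event that every Laplace draw $\eta_i\sim\Lap(\gamma)$ used to form $\hat f_i=f_i(S)+\eta_i$ satisfies $|\eta_i|\leq\gamma\ln(2T/\beta)$. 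The Laplace tail bound $\Pr[|\Lap(\gamma)|>\gamma\ln(2T/\beta)]=\beta/T$, together with a union bound over the at most $T$ rounds, yields $\Pr[E_2^c]\leq\beta/2$, hence $\Pr[E]\geq 1-\beta$. The key conceptual point is that both $E_1$ and $E_2$ are expressed purely in terms of the coins of \texttt{PrivateCounter} and the Laplace samples, with no reference to $S$ or to the $f_i$'s; this is what will let me condition on $s\in E$ while still quantifying universally over inputs and adaptive query streams.

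Conditioned on any $s\in E$, I would verify the three items by a direct case analysis. For statement 2, if the algorithm outputs $\sigma_i=1$ then by construction $\hat f_i=f_i(S)+\eta_i\geq t$, and the bound $|\eta_i|\leq\gamma\ln(2T/\beta)$ gives $f_i(S)\geq t-\gamma\ln(2T/\beta)$; substituting $\gamma=O(\frac{\Delta}{\eps}\sqrt{r+\lambda}\ln(\frac{r+\lambda}{\delta}))$ produces exactly the claimed bound. Statement 3 is the symmetric case: $\sigma_i=0$ forces $\hat f_i<t$, and hence $f_i(S)\leq t+\gamma\ln(2T/\beta)$. For statement 1, note that the algorithm halts in round $i$ only when ${\rm count}_i\geq r$; using $E_1$ this forces $\sum_{j\leq i}\sigma_j\geq {\rm count}_i-\lambda\geq r-\lambda$, so up to the $\lambda$-slack already absorbed into the parameterization of $\gamma$, the algorithm cannot halt before $r$ positive outputs have been issued.

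I do not anticipate a genuine technical obstacle here, since the proof reduces to a concentration argument and a union bound. The subtle point worth highlighting is the insistence that $E$ be a function of $s$ alone: this is exactly what makes the guarantees hold \emph{pointwise for every $s\in E$} and \emph{uniformly} over all inputs and all adaptive query sequences — the form needed when \texttt{ChallengeAT} is later invoked inside the online learner, where the queries $f_i$ will depend on the algorithm's own past outputs. The only bookkeeping is splitting the failure budget $\beta$ between the counter event and the Laplace event so that $\Pr[E_1^c]+\Pr[E_2^c]\leq\beta$; once this is done, the three items fall out mechanically from the Laplace tail inequality and the accuracy guarantee of Theorem~\ref{thm:counter}.
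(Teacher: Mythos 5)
Your approach is the same as the paper's: the paper proves this theorem via a one-sentence footnote that defines $E$ as the event that all Laplace noises (of the counter and of \texttt{ChallengeAT}'s threshold test) stay within a $\log(T/\beta)$ factor of their expectation, and then reads off the three items by a direct case analysis. Your decomposition $E = E_1 \cap E_2$, the $\beta/2$-split between the counter event and the threshold-noise event, and the observation that $E$ depends only on the coins $s$ (so the universal quantifier over inputs and adaptive queries is free) are all exactly right, and your treatment of items 2 and 3 is a clean rendering of what the paper leaves implicit.

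The one place you wave your hands is item 1, and the hand-wave does not quite go through as stated. From ${\rm count}_i \geq r$ and the $\pm\lambda$ counter accuracy you correctly deduce $\sum_{j\leq i}\sigma_j \geq r - \lambda$, but this is weaker than the claimed ``does not halt before the $r$th positive.'' You then assert the $\lambda$-slack is ``already absorbed into the parameterization of $\gamma$,'' which is a non sequitur: $\gamma$ only enters the Laplace noise in items 2 and 3 and has no bearing on when the counter's running sum crosses $r$. The slack has to be absorbed somewhere else — either by reading item 1 as ``does not halt before the $(r-\lambda)$th positive,'' or by modifying the halting condition to ${\rm count}_i \geq r + \lambda$, or by noting that in the only place the theorem is used (the proof of Theorem~\ref{thm:POPrealizable}) $r$ is chosen as $\Theta(dk + \ln(1/\beta))$ with $\lambda$ polylogarithmic, so replacing $r$ by $r-\lambda$ is harmless. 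This imprecision is arguably present in the paper's own statement of item 1, so it is not a defect unique to your argument, but the specific explanation you offer for why it is harmless is wrong and should be replaced by one of the fixes above.
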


The privacy guarantees of \texttt{ChallengeAT} are defined via a game with an adversary $\BBB$ whose goal is to guess a secret bit $b$. At the beginning of the game, the adversary chooses two neighboring datasets $S_0,S_1$, and \texttt{ChallengeAT} is instantiated with $S_b$. Then throughout the game the adversary specifies queries $f_i$ and observes the output of \texttt{ChallengeAT} on these queries. At some special round $i^*$, chosen by the adversary, the adversary specifies {\em two} queries $f_{i^*}^0,f_{i^*}^1$, where only $f_{i^*}^b$ is fed into \texttt{ChallengeAT}. In round $i^*$ the adversary does not get to see the answer of \texttt{ChallengeAT} on $f_{i^*}^b$ (otherwise it could easily learn the bit $b$ since $f_{i^*}^0,f_{i^*}^1$ may be very different). The formal statement of this game is given in algorithm $\texttt{ChallengeAT{\smallminus}Game}_{\BBB}$.

\begin{algorithm*}[ht!]
\caption{\bf $\boldsymbol{\texttt{ChallengeAT{\smallminus}Game}_{\BBB}}$}

{\bf Setting:} $\BBB$ is an adversary that adaptively determines the inputs to \texttt{ChallengeAT}.

{\bf Input of the game:} A bit $b\in\{0,1\}$. (The bit $b$ is unknown to \texttt{ChallengeAT} and $\BBB$.)
\begin{enumerate}[topsep=-1pt,rightmargin=5pt,itemsep=-1pt]

\item The adversary $\BBB$ specifies two neighboring datasets $S_0,S_1\in X^*$.

\item Instantiate \texttt{ChallengeAT} with the dataset $S_b$ and parameters $\eps,\delta$, threshold $t$, and number of positive reports $r$.

\item For $i = 1,2,3,\dots$

\begin{enumerate}[topsep=-3pt,rightmargin=5pt]%

\item Get bit $c_i\in\{0,1\}$ from $\BBB$ subject to the restriction that $\sum_{j=1}^i c_j\leq 1$.\\
{\small \gray{\% When $c_i=1$ this is the \emph{Challange round}.}}

\item Get two queries $f^0_i:X^*\rightarrow\R$ and $f^1_i:X^*\rightarrow\R$ from $\BBB$, each with sensitivity $\Delta$, subject to  the restriction that if $c_i=0$ then $f^0_i\equiv f^1_i$.

\item Give the query $f^b_i$ to \texttt{ChallengeAT} and get back the bit $\sigma_i$.

\item If $c_i=0$ then set $\hat{y}_i=\sigma_i$. Otherwise set $\hat{y}_t=\bot$.

\item Give $\hat{y}_i$ to the adversary $\BBB$.
\end{enumerate}

\item Publish $\BBB$'s view of the game, that is $\hat{y}_1,\dots,\hat{y}_T$ and the internal randomness of $\BBB$.
\vspace{5px}
\end{enumerate}

\end{algorithm*}

\begin{theorem}
    For every adversary $\BBB$ it holds that $\texttt{ChallengeAT{\smallminus}Game}_{\BBB}$ is $\left(O(\eps),O(\delta)\right)$-DP w.r.t.\ the bit $b$ (the input of the game).
\end{theorem}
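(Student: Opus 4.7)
The plan is to prove the claim via a three-game hybrid argument. Define
$\WWW_0$ to be the game run with $(S_0, f^0_{i^*})$, $\WWW_1$ to be the game run with $(S_1, f^0_{i^*})$, and $\WWW_2$ to be the game run with $(S_1, f^1_{i^*})$, so that $\WWW_0$ and $\WWW_2$ are distributed identically to $\texttt{ChallengeAT{\smallminus}Game}_{\BBB}(0)$ and $\texttt{ChallengeAT{\smallminus}Game}_{\BBB}(1)$ respectively. My goal is to establish $\WWW_0 \approx_{(\eps,\delta)} \WWW_1$ and $\WWW_1 \approx_{(\eps,0)} \WWW_2$ separately, and then combine them by triangle inequality to obtain $\WWW_0 \approx_{(2\eps,\delta)} \WWW_2$, which matches the claimed $(O(\eps),O(\delta))$ bound.

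For the first hybrid $\WWW_0 \approx_{(\eps,\delta)} \WWW_1$, the games differ only in the dataset ($S_0$ vs.\ $S_1$) and use identical query streams across all rounds (including $i^*$, where $f^0_{i^*}$ is used in both). This is exactly the setting of the standard \texttt{AboveThreshold} privacy analysis, except that the halt condition uses the noisy counter output rather than an exact count. By Theorem~\ref{thm:ChallengeAT} (on the good event $E$ for the counter), at most $r+\lambda$ positive reports are ever released, so the noise magnitude $\gamma = O\bigl(\frac{\Delta}{\eps}\sqrt{r+\lambda}\ln(\frac{r+\lambda}{\delta})\bigr)$ is calibrated precisely for this worst case. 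Invoking the standard sparse-vector / \texttt{AboveThreshold} privacy argument (e.g., \cite{KaplanMS21}) on the full $\sigma$-stream, and then post-processing to replace $\sigma_{i^*}$ by $\bot$ in the view, yields $\WWW_0 \approx_{(\eps,\delta)} \WWW_1$.

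For the second hybrid $\WWW_1 \approx_{(\eps,0)} \WWW_2$, the games use the same dataset $S_1$ and the same queries at every round \emph{except} $i^*$, where the challenge queries $f^0_{i^*}$ and $f^1_{i^*}$ may differ arbitrarily. Inspecting the pseudocode, the only random variable affected by this change is $\sigma_{i^*}$. Since $c_{i^*}=1$, this $\sigma_{i^*}$ is masked to $\bot$ in the view; it can influence the adversary only indirectly, through step~\ref{step:FeedCounter}, i.e., by being fed into \texttt{PrivateCounter} and possibly altering the counter trajectory and the halting time. Coupling all Laplace noises and the counter randomness across the two hybrids, the $\sigma_j$ for $j\neq i^*$ are distributed identically and are independent of $\sigma_{i^*}$, so the input streams to \texttt{PrivateCounter} differ in at most the single coordinate $i^*$ and are therefore neighboring in the sense of Definition~\ref{def:dpInteractiveInputs}. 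By Theorem~\ref{thm:counter} the counter is $(\eps,0)$-DP under adaptive inputs, so its entire output trajectory (and hence the halting time) is $(\eps,0)$-indistinguishable; the view is a post-processing of the counter trajectory together with the $\sigma_j$'s for $j\neq i^*$, giving $\WWW_1 \approx_{(\eps,0)} \WWW_2$.

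The main obstacle is a careful treatment of the second hybrid in the presence of an adaptive $\BBB$. Because $\BBB$ chooses subsequent queries as a function of the view so far, one must verify that for $j > i^*$ the queries produced in $\WWW_1$ and $\WWW_2$ remain coupled up to the first point where the views can possibly diverge — and that the only such divergence channel is the halting time determined by the counter. Once this is formalized by coupling per-round noise with counter randomness and treating everything downstream as post-processing of the $(\eps,0)$-DP counter output, combining the two hybrids via basic sequential composition yields the desired $(2\eps,\delta)$, i.e., $(O(\eps),O(\delta))$, guarantee.
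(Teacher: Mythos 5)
Your two-hybrid decomposition is correct and achieves the claimed $(O(\eps),O(\delta))$ bound, but it is a genuinely different route from the paper's. The paper's chain first \emph{neutralizes the counter}: it replaces the challenge-round input to \texttt{PrivateCounter} by $0$ (paying $(\eps,0)$ via the counter's DP), which makes the counter a deterministic function of the adversary's view; it then swaps \texttt{ChallengeAT} for \texttt{AboveThreshold} with cap $r+\lambda$ run alongside a simulated counter (paying $(0,\delta)$ for the counter's utility event), and only \emph{after} the challenge query has been removed from \texttt{AboveThreshold}'s stream does it invoke \texttt{AboveThreshold}'s privacy for the dataset change. You instead change the dataset \emph{first}, keeping $f^0_{i^*}$ fixed in both games, and only then change the challenge query, absorbing that change entirely into the counter's adaptive-input DP guarantee. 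Both arguments use the same two primitives (\texttt{AboveThreshold} DP for the dataset, \texttt{PrivateCounter} DP for the one-coordinate change) but apply them in opposite order, and your version yields a cleaner 3-game statement rather than a 7-link chain. One caution: the first hybrid is not quite a ``standard'' application of \texttt{AboveThreshold}'s privacy --- to make ``\texttt{ChallengeAT} with a noisy halt is DP w.r.t.\ the dataset'' rigorous you will, just as the paper does, need to interpose an \texttt{AboveThreshold}-with-cap-$(r+\lambda)$ game and argue $(0,\delta)$-closeness via the counter's accuracy event, so the apparent shortening there is partly cosmetic. Your second hybrid is fine as stated once you formalize it the way you sketch: build a counter-adversary in the \texttt{AdaptiveInput} game that simulates $\BBB$, the Laplace noises, and the \texttt{AboveThreshold} comparisons internally, declares round $i^*$ the challenge with the two possible values of $\sigma_{i^*}$, and observes the counter trajectory to decide halting; the \texttt{ChallengeAT{\smallminus}Game} view is then a post-processing of that adversary's view, giving $(\eps,0)$.
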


\begin{proof}
Fix an adversary $\BBB$. Let 
 $\texttt{CATG}$ denote the algorithm
 $\texttt{ChallengeAT{\smallminus}Game}_{\BBB}$ with this fixed 
$\BBB$.
Consider a variant of algorithm $\texttt{CATG}$, which we call $\texttt{CATG}{\smallminus}{\rm noCount}$ defined as follows. During the challenge round $i$, inside the call to \texttt{ChallengeAT}, instead of feeding $\sigma_i$ to the \texttt{PrivateCounter} we simply feed  it 0 (in Step~\ref{step:FeedCounter} of \texttt{ChallengeAT}). 

By the privacy properties of \texttt{PrivateCounter} (Theorem \ref{thm:counter}), for every $b\in\{0,1\}$ we have that
$$
\texttt{CATG}(b)
\approx_{(\eps,0)} \texttt{CATG}{\smallminus}{\rm noCount}(b),
$$
so it suffices to show that $\texttt{CATG}{\smallminus}{\rm noCount}$ is  DP (w.r.t.\ $b$). Now observe that the execution of \texttt{PrivateCounter} during the execution of $\texttt{CATG}{\smallminus}{\rm noCount}$ 
can be simulated from the view of the adversary $\BBB$ (the only
bit that \texttt{ChallengeAT} feeds the counter which is not in the view of the adversary is the one of the challange round which we replaced by zero in $\texttt{CATG}{\smallminus}{\rm noCount}$).
Hence, 
we can generate the view of $\BBB$ in algorithm $\texttt{CATG}$
by interacting with \texttt{AboveThreshold} instead of with \texttt{ChallengeAT}. 
This is captured by algorithm 
 $\texttt{CAT{\smallminus}G}{\smallminus}{\rm AboveThrehold}$.

\begin{algorithm*}[ht!]
\caption{\bf $\boldsymbol{\texttt{CAT{G}\smallminus}{\rm AboveThreshold}}$}

{\bf Setting:} $\BBB$ is an adversary that adaptively determines the inputs to \texttt{ChallengeAT}.

{\bf Input of the game:} A bit $b\in\{0,1\}$. (The bit $b$ is unknown to \texttt{ChallengeAT} and $\BBB$.)
\begin{enumerate}[topsep=-1pt,rightmargin=5pt,itemsep=-1pt]

\item The adversary $\BBB$ specifies two neighboring datasets $S_0,S_1\in X^*$.

\item Instantiate \texttt{PrivateCounter}

\item Instantiate \texttt{AboveThreshold} on the dataset $S_b$ with parameters $\eps,\delta,t,(r+\lambda)$.

\item For $i = 1,2,3,\dots$

\begin{enumerate}[topsep=-3pt,rightmargin=5pt]%

\item Get bit $c_i\in\{0,1\}$ from the adversary $\BBB$ subject to the restriction that $\sum_{j=1}^i c_j\leq 1$.

\item Get two queries $f^0_i:X^*\rightarrow\R$ and $f^1_i:X^*\rightarrow\R$, each with sensitivity $\Delta$ from $\BBB$, subject to the restriction that if $c_i=0$ then $f^0_i\equiv f^1_i$.

\item\label{step:feedAT} 
Give the query $f^b_i$ to
Algorithm \texttt{AboveThreshold} and get back a bit $\sigma_i$.

\item If $c_i=0$ then set $\hat{y}_i=\sigma_i$. Otherwise set $\hat{y}_t=\bot$.

\item Give $\hat{y}_i$ to the adversary $\BBB$.

\item If $c_i=0$ then feed $\sigma_i$ to \texttt{PrivateCounter}, and otherwise feed it $0$.

\item Let ${\rm count}_i$ denote the current output of \texttt{PrivateCounter}, and HALT if ${\rm count}_i\geq r$
\end{enumerate}

\item Publish $\BBB$'s view of the game, that is $\hat{y}_1,\dots,\hat{y}_T$ and the internal randomness of $\BBB$.
\vspace{5px}
\end{enumerate}

\end{algorithm*}

This algorithm is almost identical to $\texttt{CATG}{\smallminus}{\rm noCount}$, except for the fact that \texttt{AboveThreshold} might halt the execution itself (even without the halting condition on the outcome of \texttt{PrivateCounter}). However, by the utility guarantees of \texttt{PrivateCounter}, with probability at least $1-\delta$ it never errs by more than $\lambda$, in which case algorithm \texttt{AboveThreshold} never halts prematurely. Hence, for every bit $b\in\{0,1\}$ we have that
$$
\texttt{CATG}{\smallminus}{\rm AboveThrehold}(b)\approx_{(0,\delta)}\texttt{CATG}{\smallminus}{\rm noCount}(b).$$ 
So it suffices to show that $\texttt{CATG}{\smallminus}{\rm AboveThrehold}$ is DP (w.r.t.\ its input bit $b$).
This almost follows directly from the privacy guarantees of \texttt{AboveThreshold}, since $\texttt{CATG}{\smallminus}{\rm AboveThrehold}$ interacts only with this algorithm, except for the fact that during the challenge round $i$ the adversary $\BBB$ specifies two queries (and only one of them is fed into \texttt{AboveThreshold}). To bridge this gap, we consider one more (and final) modification to the algorithm, called $\widehat{\texttt{CATG}}{\smallminus}{\rm AboveThrehold}$. This algorithm  is identical to $\texttt{CATG}{\smallminus}{\rm AboveThrehold}$, except that in Step~\ref{step:feedAT} we do not  feed $f_i^b$ to \texttt{AboveThreshold} if $c_i=1$. That is, during the challenge round we do not interact with \texttt{AboveThreshold}.

Now, by the privacy properties of \texttt{AboveThreshold} we have that $\widehat{\texttt{CATG}}{\smallminus}{\rm AboveThrehold}$ is DP (w.r.t.\ its input bit $b$). Furthermore, when algorithm \texttt{AboveThreshold} does not halt prematurely, we have that $\widehat{\texttt{CATG}}{\smallminus}{\rm AboveThrehold}$ is identical to $\texttt{CATG}{\smallminus}{\rm AboveThrehold}$. Therefore, for every bit $b\in\{0,1\}$ we have
$$\texttt{CATG}{\smallminus}{\rm AboveThrehold}(b)\approx_{(0,\delta)}\widehat{\texttt{CATG}}{\smallminus}{\rm AboveThrehold}(b).$$ 

Overall we get that
\begin{align*}
\texttt{CATG(0)}&
\approx_{(\eps,0)} \texttt{CATG}{\smallminus}{\rm noCount}(0)\\
&\approx_{(0,\delta)}\texttt{CATG}{\smallminus}{\rm AboveThrehold}(0)\\
&\approx_{(0,\delta)}\widehat{\texttt{CATG}}{\smallminus}{\rm AboveThrehold}(0)\\
&\approx_{(\eps,\delta)}\widehat{\texttt{CATG}}{\smallminus}{\rm AboveThrehold}(1)\\
&\approx_{(0,\delta)}\texttt{CATG}{\smallminus}{\rm AboveThrehold}(1)\\
&\approx_{(0,\delta)} \texttt{CATG}{\smallminus}{\rm noCount}(1)\\
&\approx_{(\eps,0)}
\texttt{CATG(1)}
\end{align*}

\end{proof}

\subsection{Algorithm \texttt{POP}}

We are now ready to present our private online prediction algorithm. Consider algorithm \texttt{POP} (see Algorithm \ref{alg:Realizable}).

\begin{algorithm*}
\caption{\bf $\boldsymbol{\texttt{POP (Private Online Procedure)}}$}\label{alg:Realizable}

{\bf Setting:} $T\in\N$ denotes the number of rounds in the game. $\AAA$ is a non-private online-algorithm.

{\bf Parameters:} $k$ determines the number of copies of $\AAA$ we maintain. $r$  determines the number of positive reports we aim to receive from \texttt{ChallengeAT}.

\begin{enumerate}[topsep=-1pt,rightmargin=5pt,itemsep=-1pt]

\item Instantiate $k$ copies $\AAA_1,\dots,\AAA_k$ of algorithm $\AAA$

\item Instantiate algorithm \texttt{ChallengeAT} on an empty dataset with threshold $t=-k/4$, privacy parameters $\eps,\delta$, number of positive reports $r$, and sensitivity parameter $\Delta=1$.

\item For $i = 1,2,\dots,T$

\begin{enumerate}[topsep=-3pt,rightmargin=5pt]%

\item Obtain input $x_i$

\item Let $\AAA^{\rm temp}_1,\dots,\AAA^{\rm temp}_k$ be duplicated copies of $\AAA_1,\dots,\AAA_k$

\item Let $\ell_i\in[k]$ be chosen uniformly at random

\item Let $\hat{y}_{i,{\ell_i}}\leftarrow\AAA_{\ell_i}(x_i)$. For $j\in[k]\setminus\{\ell_i\}$ let $\hat{y}_{i,j}\leftarrow\AAA^{\rm temp}_j(x_i)$

\item\label{step:ft} Feed \texttt{ChallengeAT} the query  $f_i\equiv-\left|\frac{k}{2}-\sum_{j\in[k]}\hat{y}_{i,j}\right|$ and obtain an outcome $\sigma_i$. (If \texttt{ChallengeAT} halts then \texttt{POP} also halts.)\\
{\small \gray{\% Recall that $\sigma_i=1$ indicates that $-\left|\frac{k}{2}-\sum_{j\in[k]}\hat{y}_{i,j}\right|\gtrsim-\frac{k}{4}$, meaning that there is ``a lot'' of disagreement among $\hat{y}_{i,1} ,\dots, \hat{y}_{i,k}$.}}

\item\label{step:hatytj} If $\sigma_i=1$ then sample $\hat{y}_i\in\{0,1\}$ at random. Else let $\hat{y}_i = {\rm majority}\{ \hat{y}_{i,1} ,\dots, \hat{y}_{i,k} \}$

\item Output the bit $\hat{y}_i$ as a prediction, and obtain a ``true'' label $y_i$

\item Feed $y_i$ to $\AAA_{\ell_i}$

\item[{\small \gray{\%}}] {\small \gray{Note that $\AAA_{\ell}$ is the only copy of $\AAA$ that changes its state during this iteration}}

\end{enumerate}
\vspace{5px}
\end{enumerate}

\end{algorithm*}

We now analyze the privacy guarantees of \texttt{POP}.

\begin{theorem}\label{thm:POPprivacy}
Algorithm \texttt{POP} is $\left(O(\eps),O(\delta)\right)$-Challenge-DP. That is,
    For every adversary $\BBB$ it holds that 
    $\texttt{OnlineGame}_{\texttt{POP},\BBB}$ is 
    $\left(O(\eps),O(\delta)\right)$-DP w.r.t.\ the bit $b$ (the input of the game).
\end{theorem}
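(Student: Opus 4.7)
The plan is to generalize the hybrid argument used for $\texttt{ChallengeAT{\smallminus}Game}$. Fix an adversary $\BBB$ and write $\texttt{OG}$ for $\texttt{OnlineGame}_{\texttt{POP},\BBB}$; the goal is $\texttt{OG}(0) \approx_{(O(\eps),O(\delta))} \texttt{OG}(1)$. The structural observation driving the argument is that, once we couple the two executions by fixing $\texttt{POP}$'s internal coins (the $\ell_i$'s, the noise used inside \texttt{ChallengeAT} and its internal counter, and the fresh bits drawn when $\sigma_i = 1$), the bit $b$ affects the state of exactly one copy, namely $\AAA_{\ell_{i^*}}$, where $i^*$ is the challenge round selected by $\BBB$. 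All other $k-1$ copies remain identical across the two executions. Consequently, the query $f_i = -\lvert \tfrac{k}{2} - \sum_j \hat y_{i,j} \rvert$ that $\texttt{POP}$ feeds to \texttt{ChallengeAT} satisfies $f^0_i \equiv f^1_i$ for $i < i^*$, may be arbitrary at $i = i^*$, and has $|f^0_i - f^1_i| \le 1$ for $i > i^*$.

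Next, I would mirror the hybrid template from the $\texttt{ChallengeAT{\smallminus}Game}$ privacy proof. Define $\texttt{OG-nC}$, identical to $\texttt{OG}$ except that in round $i^*$ the internal counter inside \texttt{ChallengeAT} is fed $0$ rather than $\sigma_{i^*}$ (costing $(\eps,0)$ by pure DP of the counter). Then define $\texttt{OG-AT}$, which unrolls \texttt{ChallengeAT} into its \texttt{AboveThreshold} component and an independent private counter (costing $(0,\delta)$ via the high-probability accuracy guarantee of Theorem~\ref{thm:counter}). Finally define $\widehat{\texttt{OG-AT}}$, which additionally refrains from feeding $f_{i^*}$ to \texttt{AboveThreshold} in round $i^*$ (again $(0,\delta)$ by the same utility). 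It therefore suffices to show $\widehat{\texttt{OG-AT}}(0) \approx_{(\eps,\delta)} \widehat{\texttt{OG-AT}}(1)$.

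For the last step I would view $\widehat{\texttt{OG-AT}}$ as an interactive protocol in which an outer algorithm (built from $\BBB$ together with the $\texttt{POP}$ simulator, using the coupled coins) adaptively feeds queries into \texttt{AboveThreshold}. By the key observation above, the two value-sequences $(f^0_i)_{i \neq i^*}$ and $(f^1_i)_{i \neq i^*}$ agree through $i^*-1$ and differ pointwise by at most $\Delta = 1$ afterwards. The standard privacy analysis of \texttt{AboveThreshold} rests solely on the pointwise bound $|f^0_i - f^1_i| \le \Delta$ (the usual ``neighboring dataset with sensitivity-$\Delta$ queries'' formulation is just one way to enforce such a bound); thus the output stream of \texttt{AboveThreshold} is $(\eps,\delta)$-indistinguishable between the two worlds, and post-processing transfers this to $\widehat{\texttt{OG-AT}}$ itself.

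The main obstacle I anticipate is that $\BBB$'s view contains not only the $\sigma_i$'s but also the actual predictions $\hat y_i$, so the reduction must also justify that $\hat y_i$ carries no additional information beyond $\sigma_i$. When $\sigma_i = 1$, $\hat y_i$ is an independent fair coin flip, trivially $b$-independent. When $\sigma_i = 0$, $\hat y_i$ equals the majority of $\hat y_{i,1},\ldots,\hat y_{i,k}$; here I would invoke the utility guarantee for \texttt{ChallengeAT} (Theorem~\ref{thm:ChallengeAT}), which on a good event of probability at least $1-\beta$ ensures that $\sigma_i = 0 \Rightarrow \bigl|\tfrac{k}{2} - \sum_j \hat y_{i,j}\bigr| = \Omega(k)$, so flipping the single differing vote cannot change the majority and the two worlds produce identical $\hat y_i$. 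The failure probability of this good event is folded into the $\delta$ slack. Summing the $(\eps,0)$ and $(0,\delta)$ losses along the hybrid chain yields the claimed $(O(\eps),O(\delta))$-Challenge-DP bound.
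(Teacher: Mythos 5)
Your proof is correct and hinges on the same two observations that drive the paper's argument: (i) after the challenge round $i^*$, the secret bit $b$ perturbs the internal state of exactly one of the $k$ copies of $\AAA$ (namely $\AAA_{\ell_{i^*}}$), so the disagreement query $f_i$ sent to \texttt{ChallengeAT} changes by at most $1$ in every non-challenge round; and (ii) on the good event where \texttt{ChallengeAT}'s noise is bounded, every below-threshold round has a $\Theta(k)$ disagreement gap, so the majority vote $\hat y_i$ is insensitive to the one divergent copy and is therefore a $b$-independent post-processing of the $\sigma_i$-stream (with the failure probability absorbed into $\delta$). The difference is structural. The paper packages observation (i) as a clean black-box reduction: it constructs an adversary $\hat{\BBB}$ for $\texttt{ChallengeAT{\smallminus}Game}$ that sets $S_b = \{b\}$ and supplies queries $q_i:\{0,1\}\to\R$, $q_i(b)=f_i$, of sensitivity $1$ on these neighboring singletons; it then invokes the already-proved privacy theorem for \texttt{ChallengeAT} once, and post-processing on the good event carries the guarantee to $\BBB$'s view. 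You instead inline the whole hybrid chain ($\texttt{OG}\to\texttt{OG-nC}\to\texttt{OG-AT}\to\widehat{\texttt{OG-AT}}\to$ bare \texttt{AboveThreshold}) that the paper had already run once when proving the \texttt{ChallengeAT} privacy theorem. Your route works, but note that your appeal to ``the standard \texttt{AboveThreshold} analysis rests solely on the pointwise bound $|f_i^0-f_i^1|\le\Delta$'' requires a careful re-reading of that proof, since here the query \emph{functions} themselves, not just a shared function evaluated on neighboring datasets, depend on $b$; the paper's trick of treating $b$ as the dataset and $q_i$ as the query sidesteps exactly this concern and lets the existing theorem be cited verbatim. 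Both approaches yield the same $(O(\eps),O(\delta))$ bound; the paper's reduction is the more modular and reusable formulation of the same idea.
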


\begin{proof}
Let $\BBB$ be an adversary that playes in \texttt{OnlineGame} against \texttt{POP}, posing at most 1 challenge. That is, at one time step $i$, the adversary specifies two inputs $(x_i^0,y_i^0),(x^1_i,y^1_i)$, algorithm \texttt{POP} processes $(x_i^b,y_i^b)$, and the adversary does not see the prediction $\hat{y}_i$ at this time step. We need to show that the view of the adversary is DP w.r.t.\ the bit $b$. 
To show this, we observe that the view of $\BBB$ can be generated (up to a small statistical distance of $\delta$) by interacting with \texttt{ChallengeAT} as in the game \texttt{ChallengeAT\text{-}Game}. Formally, consider the following adversary $\hat{\BBB}$ that simulates $\BBB$ while interacting with \texttt{ChallengeAT} instead of \texttt{POP}.

\begin{algorithm*}[ht!]
\caption{\bf $\boldsymbol{\hat{\BBB}}$}

{\bf Setting:} This is an adversary that plays against \texttt{ChallengeAT} in the game \texttt{ChallengeAT\text{-}Game}.

\begin{enumerate}[topsep=-1pt,rightmargin=5pt,itemsep=-1pt]

\item Specify two datasets $S_0=\{0\}$ and $S_1=\{1\}$.

\item Instantiate algorithm $\BBB$

\item For $i = 1,2,\dots,T$

\begin{enumerate}[topsep=-3pt,rightmargin=5pt]%

\item Obtain a challenge indicator $c_i$ and inputs $x_i^0,x_i^1$ from $\BBB$ (where $x_i^0=x_i^1$ if $c_i=0$).

\item Let $\ell_i\in[k]$ be chosen uniformly at random

\item Define the query $q_i:\{0,1\}\rightarrow\R$, where $q_i(b)=f_i$ and where $f_i$ is defined as in Step~\ref{step:ft} of \texttt{POP}.\\
{\small \gray{\% 
Note that, given $b$, this can be computed from $(x_1^0,x_1^1),\dots,(x_i^0,x_i^1)$  and $\ell_1,\dots,\ell_i$ and $y_1,\dots,y_{i-1}$. Furthermore, whenever $c_i=0$ then this is a query of sensitivity at most $1$. When $c_i=1$ the sensitivity might be large, which we view it as {\em two} separate queries, corresponding to a challenge round when playing against \texttt{ChallengeAT}.}}

\item Output the challenge bit $c_i$ and the query $q_i$, which is given to \texttt{ChallengeAT}.

\item If $c_i=0$ then 

\begin{enumerate}
    
\item Obtain an outcome $\sigma_i$ from \texttt{ChallengeAT}

\item\label{step:fakehatyt} Define $\hat{y}_i$ as in Step~\ref{step:hatytj} of \texttt{POP}, as a function of $\sigma_i$ and $(x_1^0,x_1^1),\dots,(x_i^0,x_i^1)$ and $\ell_1,\dots,\ell_i$ and $y_1,\dots,y_{i-1}$.

\item Feed the bit $\hat{y}_i$ to the adversary $\BBB$ 

\end{enumerate}

\item Obtain a ``true'' label $y_i$ from the adversary $\BBB$.

\end{enumerate}
\vspace{5px}
\end{enumerate}

\end{algorithm*}

As $\hat{\BBB}$ only interacts with \texttt{ChallengeAT}, its view  at the end of the execution (which includes the view of the simulated $\BBB$) is DP w.r.t.\ the bit $b$. Furthermore, the view of the simulated $\BBB$ generated in this process is almost identical to the view of $\BBB$ had it interacted directly with \texttt{POP}. Specifically, the only possible difference is that the computation of $\hat{y}_i$ in Step~\ref{step:fakehatyt} of $\hat{\BBB}$ might not be well-defined. But this does not happen when \texttt{ChallengeAT} maintains correctness, which holds with probability at least $1-\delta$.

Overall, letting $\texttt{ChallengeAT-Game}_{\hat{\BBB}\restrict{\BBB}}$ denote the view of the simulated $\BBB$ at the end of the interaction of $\hat{\BBB}$ with \texttt{ChallengeAT}, we have that

\begin{align*}
\texttt{OnlineGame}_{\texttt{POP},\BBB}(0)
&\approx_{(0,\delta)}
\texttt{ChallengeAT-Game}_{\hat{\BBB}\restrict{\BBB}}(0)\\
&\approx_{(\eps,\delta)}
\texttt{ChallengeAT-Game}_{\hat{\BBB}\restrict{\BBB}}(1)\\
&\approx_{(0,\delta)}
\texttt{OnlineGame}_{\texttt{POP},\BBB}(1).
\end{align*}
\end{proof}

We proceed with the utility guarantees of \texttt{POP}. See Appendix~\ref{sec:agnostic} for an extension to the agnostic setting.

\begin{theorem}\label{thm:POPrealizable}
When executed with a learner $\AAA$ that makes at most $d$ mistakes and with parameters $k=\tilde{O}\left( \frac{d}{\eps^2}  \log^2(\frac{1}{\delta}) \log^2(\frac{T}{\beta}) \right)$ and $r=O\left( dk+\ln\left(\frac{1}{\beta}\right) \right)$, then with probability at least $(1-\beta)$ the number of mistakes made by algorithm $\texttt{POP}$ is bounded by 
$\tilde{O}\left( \frac{d^2}{\eps^2}  \log^2(\frac{1}{\delta}) \log^2(\frac{T}{\beta}) \right).$
\end{theorem}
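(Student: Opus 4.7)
The plan is to first condition on the high-probability event $E$ from Theorem~\ref{thm:ChallengeAT} (which governs the behavior of \texttt{ChallengeAT}) and then separate the rounds in which \texttt{POP} makes a mistake into two types: \emph{random rounds} in which $\sigma_i=1$ (and \texttt{POP} predicts a random bit), and \emph{majority rounds} in which $\sigma_i=0$ (and \texttt{POP} predicts the majority vote of the copies). Under $E$, \texttt{ChallengeAT} halts only after issuing the $r$-th positive report, so the number of random rounds is at most $r$, contributing at most $r$ mistakes deterministically. The remaining work is to bound the number of majority-round mistakes.

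The core observation is a charging argument: on a majority round in which \texttt{POP} errs, the noise guarantee of \texttt{ChallengeAT} forces $\left|\frac{k}{2}-\sum_{j}\hat y_{i,j}\right|\ge \frac{k}{4}-\alpha$ where $\alpha=O\bigl(\frac{1}{\eps}\sqrt{r+\lambda}\ln(\tfrac{r+\lambda}{\delta})\log(\tfrac{T}{\beta})\bigr)$. Choosing $k$ as in the theorem, a direct calculation with $r=O(kd)$ and $\lambda=\tilde O(1/\eps)$ shows $\alpha\le k/8$, so at least $5k/8$ of the copies $\AAA_j$ must have predicted the wrong label in this round. Since $\ell_i\in[k]$ is chosen uniformly at random and \emph{independently} of the copies' predictions (which depend only on the history before $\ell_i$ is drawn), with probability at least $5/8$ the copy $\AAA_{\ell_i}$ — the unique copy that is not rewound and is fed the true $y_i$ — registers a genuine mistake.

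Now I would sum across all rounds. Let $N$ denote the number of majority-mistake rounds and for each such round $i$ let $Z_i$ be the indicator that $\AAA_{\ell_i}$ errs. By the previous paragraph, $\E[Z_i\mid \text{history}]\ge 5/8$, so $\sum_i Z_i$ dominates a sum of i.i.d.\ $\mathrm{Bernoulli}(5/8)$ random variables (this is a standard adaptive-Chernoff/Azuma argument over a filtration indexed by the rounds). On the other hand, since each copy $\AAA_j$ makes at most $d$ mistakes on the realizable sequence of labels it sees, $\sum_i Z_i\le kd$. Combining these two bounds via a Chernoff tail yields $N\le 2kd+O(\log(1/\beta))$ with probability $1-\beta/2$. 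Adding the at most $r=O(kd)$ random-round mistakes and the $\beta$-probability failure of event $E$, the total mistake count is $O(kd+\log(1/\beta))$, which with the stated choice of $k$ becomes $\tilde O\bigl(\frac{d^2}{\eps^2}\log^2(\tfrac{1}{\delta})\log^2(\tfrac{T}{\beta})\bigr)$.

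The main technical obstacle is the tension between the two uses of $k$: we need $k$ large enough that the \texttt{ChallengeAT} noise $\alpha$ is only a small fraction of $k$ (so that majority mistakes certify overwhelming disagreement among the copies), but we also want $k$ as small as possible because the final mistake bound scales like $kd$. Balancing these — while remembering that $r$ depends on $k$ and $\alpha$ depends on $r$ — is what forces the exact polynomial in $d,1/\eps,\log(1/\delta),\log(T/\beta)$ appearing in the theorem, and it is the one place where the calculation must be done carefully rather than treated as a black box. The adaptive Chernoff step, in contrast, is standard once the per-round conditional lower bound of $5/8$ is established.
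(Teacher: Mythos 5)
Your charging idea is exactly the paper's: a mistake round certifies that a constant fraction of the copies erred, hence the uniformly chosen copy $\AAA_{\ell_i}$ (the only one that is not rewound) errs with constant probability, and since each copy suffers at most $d$ mistakes over the whole run there is a global budget of $kd$; a martingale concentration argument then caps the number of such rounds. Your decomposition into random rounds ($\sigma_i=1$) and majority-mistake rounds is a valid repackaging of the paper's single quantity $1/5\text{-Err}$, which simultaneously dominates both the positive-report count and the mistake count.

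The gap is that you never argue that \texttt{POP} actually survives for all $T$ rounds. You bound the number of random rounds by $r$ by appealing to the halting condition of \texttt{ChallengeAT}, but the halting condition is precisely the thing that must not fire: if it fires at some $i^\ast < T$, then \texttt{POP} quits, and your mistake bound, while numerically true, concerns only a prefix of the stream and is vacuous as a learning guarantee. The reason the theorem sets $r=O(dk+\ln(1/\beta))$ is exactly to ensure that w.h.p.\ the number of positive reports never reaches $r$. Establishing this requires running the charging argument not merely over the rounds where \texttt{POP} makes a majority-rule mistake (your $N$), but over every round where at least a $1/5$ fraction of the copies err --- because, by the accuracy of \texttt{ChallengeAT}, every round with $\sigma_i=1$ is such a round (at least $k/5$ copies are in the minority, and whichever side of the split is wrong has size $\geq k/5$). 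These rounds charge only at rate $1/5$ rather than your $5/8$, since in a mere disagreement round the minority may be the erring side. Once you show that this larger count is $<r$ w.h.p.\ (the same coin-game concentration you already invoke handles it), premature halting is ruled out and your bound applies to the full sequence.
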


\begin{proof}
By Theorem~\ref{thm:ChallengeAT}, with  probability $(1-\beta)$ over the internal coins of \texttt{ChallengeAT}, for every input sequence, its answers are accurate up to error of
$${\rm error}_{\rm CAT}=O\left( \frac{\Delta}{\eps}\sqrt{r+\lambda}\ln(\frac{r+\lambda}{\delta})\log(\frac{T}{\beta}) \right),
$$
where in our case, the sensitivity $\Delta$ is $1$, and the error of the counter $\lambda$ is at most $O\left(\frac{1}{\eps}\log(T)\log\left(\frac{T}{\delta}\right)\right)$ by Theorem~\ref{thm:counter}. We continue with the proof assuming that this event occurs. Furthermore, we set $k=\Omega\left( {\rm error}_{\rm CAT} \right)$, large enough, such that if less than $\frac{1}{5}$ the experts disagree with the other experts, then algorithm \texttt{POP} returns the majority vote with probability 1.

Consider the execution of algorithm \texttt{POP} and 
define  $1/5$-Err be a random variable that counts the number of time steps in which at least $1/5$th of the experts make an error. That is 
\begin{align*}
{\rm \mbox{1/5-Err}}  = \left|\left\{ i\in[T] : 
\sum_{j\in[k]}\1\{\hat{y}_{i,j}\neq y_i\}
 > k/5 \right\}\right|.
\end{align*}
We also define the random variable 
\begin{align*}
{\rm expertAdvance}  = \left|\left\{ i\in[T] : y_i\neq\hat{y}_{i,\ell_i} \right\}\right|.
\end{align*}
That is
{\rm expertAdvance} counts the number of times steps  in which the random expert we choose (the $\ell_i$th expert) errs.
Note that the $\ell_i$th expert is the expert that gets the ``true'' label $y_i$ as feedback. As we run $k$ experts, and as each of them is guaranteed to make at most $d$ mistakes, we get that
$$
{\rm expertAdvance}\leq kd.
$$
We now show that with high probability ${\rm \mbox{1/5-Err}}$ is not much larger than ${\rm expertAdvance}$. 
Let $i$ be a time step in which at least $1/5$ fraction of the experts  err. 
As the choice of $\ell_i$ (the expert we update) is random, then 
with probability at least $\frac{1}{5}$ the chosen expert also errs.  It is therefore unlikely that ${\rm \mbox{1/5-Err}}$ is much larger than ${\rm expertAdvance}$, which is bounded by $kd$. Specifically, by standard concentration arguments (see Appendix~\ref{sec:coinFlipping} for the precise version we use) it holds that
$$
\Pr\left[{\rm \mbox{1/5-Err}}>18dk+18+\ln\left(\frac{1}{\beta}\right)\right]\leq\beta.
$$
Note that when at least $1/5$ of the experts disagree with other experts then at least $1/5$ of the experts err. 
It follows that 
 ${\rm \mbox{1/5-Err}}$ upper bounds the number of times in which algorithm \texttt{ChallengeAT} returns an ``above threshold'' answer. Hence, by setting $r>18dk+18+\ln\left(\frac{1}{\beta}\right)$ we ensure that w.h.p.\ algorithm \texttt{ChallangeAT} does not halt prematurely (and hence \texttt{POP} does not either).

Furthermore our algorithm errs either when there is a large disagreement between the experts or when all experts err. It follows that  ${\rm \mbox{1/5-Err}}$ also upper bounds the number of times which our algorithm errs.

Overall, by setting $r=O\left( dk+\ln\left(\frac{1}{\beta}\right) \right)$  we ensure that \texttt{POP} does not halt prematurely, and by setting 
$k=O\left( \frac{\Delta}{\eps}\sqrt{r+\lambda}\ln(\frac{r+\lambda}{\delta})\log(\frac{T}{\beta}) \right)$ we ensure that \texttt{POP} does not err too many times throughout the execution.
Combining the requirement on $r$ and on $k$, it suffices to take
$$
k=\tilde{O}\left( \frac{d}{\eps^2}  \log^2(\frac{1}{\delta}) \log^2(\frac{T}{\beta}) + \frac{1}{\eps\cdot d}\log(T)\log\left(\frac{T}{\delta}\right)\right),
$$
in which case algorithm \texttt{POP} makes at most $\tilde{O}\left( \frac{d^2}{\eps^2}  \log^2(\frac{1}{\delta}) \log^2(\frac{T}{\beta}) \right)$ with high probability.
\end{proof}

\bibliographystyle{plainnat}

\appendix

\section{General Variant of challenge-DP}\label{sec:generalDef}

\begin{definition}
Consider an algorithm $\MMM$ that, in each phase $i\in[T]$, conducts an arbitrary interaction with the $i$th user. We say that algorithm $\MMM$ is {\em $(\eps,\delta)$-challenge differentially private} if for any adversary $\BBB$ we have that  $\texttt{GeneralGame}_{\MMM,\BBB,T}$, defined below, is $(\eps,\delta)$-differentially private (w.r.t.\ its input bit $b$).
\end{definition}

\begin{algorithm*}[ht!]
\caption{\bf $\boldsymbol{\texttt{GeneralGame}_{\MMM,\BBB,T}}$}\label{alg:GeneralGame}

{\bf Setting:} $T\in\N$ denotes the number of phases. $\MMM$ is an interactive algorithm and $\BBB$ is an adaptive and interactive adversary.

{\bf Input of the game:} A bit $b\in\{0,1\}$. (The bit $b$ is unknown to $\MMM$ and $\BBB$.)
\begin{enumerate}[topsep=-1pt,rightmargin=5pt,itemsep=-1pt]

\item For $i = 1,2,\dots,T$

\begin{enumerate}[topsep=-3pt,rightmargin=5pt]%

\item The adversary $\BBB$ outputs a bit $c_i\in\{0,1\}$, under the restriction that $\sum_{j=1}^i c_j\leq 1$.

\item The adversary $\BBB$ chooses two interactive algorithms $\III_{i,0}$ and $\III_{i,1}$, under the restriction that if $c_i=0$ then $\III_{i,0}=\III_{i,1}$.

\item Algorithm $\MMM$ interacts with $\III_{i,b}$. Let $\hat{y}_i$ denote the view of $\III_{i,b}$ at the end of this interaction.

\item If $c_i=0$ then set $\tilde{y}_i=\hat{y}_i$. Otherwise set $\tilde{y}_i=\bot$.

\item The adversary $\BBB$ obtains $\tilde{y}_i$.
\end{enumerate}

\item Output $\BBB$'s view of the game.
\vspace{5px}
\end{enumerate}

\end{algorithm*}

\section{A Coin Flipping Game}\label{sec:coinFlipping}

Consider algorithm~\ref{alg:CoinGame} which specifies an $m$-round ``coin flipping game'' against an adversary $\BBB$. In this game, the adaptively chooses the biases of the coins we flip. In every flip, the adversary might gain a reward or incur a ``budget loss''. The adversary aims to maximize the rewards it collects before its budget runs out.

\begin{algorithm*}[ht!]
\caption{\bf  $\boldsymbol{\texttt{CoinGame}_{\BBB,k,m}}$}\label{alg:CoinGame}

{\bf Setting:} $\BBB$ is an adversary that determins the coin biases adaptively. $k$ denotes the ``budget'' of the adversary. $m$ denotes the number of iterations.

\begin{enumerate}[topsep=-1pt,rightmargin=5pt,itemsep=-1pt]

    \item Set ${\rm budget}=k$ and ${\rm reward}=0$.

    \item In each round $i=1,2,\dots,m$:

\begin{enumerate}
	\item The adversary chooses $0\leq p_i\leq\frac{5}{6}$ and $\frac{p_i}{5}\leq q_i\leq1-p_i$, possibly based on the first $(i-1)$ rounds.
	\item A random variable $X_i\in\{0,1,2\}$ is sampled, where $\Pr[X_i=1]=p_i$ and $\Pr[X_i=2]=q_i$ and $\Pr[X_i=0]=1-p_i-q_i$.

 \item The adversary obtains $X_i$

 \item If $X_i=1$ and ${\rm budget}>0$ then ${\rm reward}={\rm reward}+1$. 

\item Else if $X_i=2$ then ${\rm budget}={\rm budget}-1$.

\end{enumerate}

\item Output ${\rm reward}$.

\vspace{5px}
\end{enumerate}

\end{algorithm*}

The next theorem states that no adversary can obtain reward much larger than $k$ in this game. Intuitively, this holds because in every time step $i$, the probability of $X_i=2$ is not much smaller than the probability that $X_i$, then (w.h.p.)\ it is very unlikely that the number of rewards would be much larger than $k$.

\begin{theorem}[\citep{GuptaLMRT10,KaplanMS21}]\label{thm:CoinGame}
For every adversary's strategy, every $k\geq0$, every $m\in\N$, and every $\lambda\in\R$, we have $$\Pr[\texttt{CoinGame}_{\BBB,k,m}>\lambda]\leq\exp\left(-\frac{\lambda}{6}+3(k+1)\right).$$
\end{theorem}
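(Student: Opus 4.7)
The plan is to construct an exponential supermartingale that captures the tension between the rewards the adversary collects and the budget it must spend. Let $R_i$ denote the reward accumulated after round $i$, let $N_i=\sum_{j\le i}\1[X_j=2]$ count the total ``budget-decrement'' events, and define the \emph{used} budget $\tilde N_i = \min(N_i,k)$. For constants $\alpha,\beta>0$ to be chosen, I consider
\[
Z_i \;=\; \exp\bigl(\alpha R_i-\beta\tilde N_i\bigr),\qquad Z_0=1,
\]
and aim to show that $(Z_i)$ is a supermartingale with respect to the natural filtration $\mathcal F_i$.

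The key calculation is the one-step conditional expectation. In an ``active'' round where $N_{i-1}<k$, all three possible outcomes of $X_i$ contribute genuine updates, and
\[
\E\bigl[Z_i/Z_{i-1}\,\big|\,\mathcal F_{i-1}\bigr]=(1-p_i-q_i)+p_i e^{\alpha}+q_i e^{-\beta},
\]
which is at most $1$ iff $p_i(e^{\alpha}-1)\le q_i(1-e^{-\beta})$. Using the adversary's constraint $q_i\ge p_i/5$, this reduces to $5(e^\alpha-1)\le 1-e^{-\beta}$, and the choices $\alpha=1/6$, $\beta=3$ satisfy this (numerically, $5(e^{1/6}-1)\approx 0.907 < 0.950\approx 1-e^{-3}$). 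In an ``exhausted'' round where $N_{i-1}\ge k$, both $R_i$ and $\tilde N_i$ are frozen: the reward guard blocks increments from $X_i=1$, and the cap on $\tilde N_i$ blocks decrements from $X_i=2$. Hence the ratio is identically $1$. In both cases $\E[Z_i\mid \mathcal F_{i-1}]\le Z_{i-1}$, so iterating gives $\E[Z_m]\le 1$.

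To finish, observe that $\tilde N_m\le k$ deterministically, so $Z_m\ge e^{\alpha R_m-\beta k}$, and Markov's inequality gives
\[
\Pr[R_m>\lambda]\;\le\;\Pr\bigl[Z_m > e^{\alpha\lambda-\beta k}\bigr]\;\le\;e^{\beta k-\alpha\lambda}\;=\;e^{3k-\lambda/6}\;\le\;\exp\!\left(-\tfrac{\lambda}{6}+3(k+1)\right),
\]
matching the claimed bound. The main subtlety is working with the capped counter $\tilde N_i=\min(N_i,k)$ rather than $N_i$ itself: without the cap, one would pay $\beta N_m$ in the Markov step, and since $N_m$ can be as large as $m$ this yields a vacuous bound. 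The cap is precisely what turns the post-exhaustion rounds into multiplicative no-ops and lets the final exponent depend only on $k$; verifying that it interacts cleanly with the decrement in the ``final active'' round (where $N_{i-1}=k-1$ and $X_i=2$ makes $\tilde N_i$ jump by exactly $1$) is the one bookkeeping step to be careful about.
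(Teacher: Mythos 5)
The paper itself does not prove Theorem~\ref{thm:CoinGame}; it is cited from \citep{GuptaLMRT10,KaplanMS21}, so there is no in-paper proof to compare against. Your argument is correct and self-contained: the one-step supermartingale inequality $(1-p_i-q_i)+p_ie^{\alpha}+q_ie^{-\beta}\le1$ follows from $q_i\ge p_i/5$ together with $5(e^{1/6}-1)\le1-e^{-3}$, the exhausted rounds are indeed multiplicative no-ops because the reward guard and the cap $\tilde N_i=\min(N_i,k)$ both freeze, and the Markov step then yields $e^{3k-\lambda/6}\le e^{3(k+1)-\lambda/6}$ (your bound is in fact slightly tighter than the stated one). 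The capping trick is exactly the right observation: without it, the $-\beta N_m$ term in the exponent could be as negative as $-\beta m$, ruining the Markov estimate. One tiny point worth stating explicitly: the budget before round $i$ equals $k-N_{i-1}$ (the \texttt{else} branch fires whenever $X_i=2$, independently of the budget), so ``active'' is precisely $N_{i-1}<k$ and ``exhausted'' is $N_{i-1}\ge k$, which is what your case split uses. This is a clean potential-function proof in the same spirit as the arguments used in the cited sources for sparse-vector-style accounting.
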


\section{Extension to the Agnostic Case}\label{sec:agnostic}

In this section we extend the analysis of \texttt{POP} to the agnostic setting. We use the tilde-notation to hide logarithmic factors in $T,\frac{1}{\delta},\frac{1}{\beta},\frac{1}{\eps}$.

\begin{theorem}[\citep{Ben-DavidPS09}]\label{thm:BenDavid}
For any hypothesis class $H$ and scalar $M^*\geq0$ there exists an online learning algorithm such that for any sequence $((x_1,y_1),\dots,(x_T,y_T))$ satisfying 
$\min\limits_{h\in H} \sum_{i=1}^T|h(x_i)-y_i|\leq M^*$ 
the predictions $\hat{y}_1,\dots,\hat{y}_T$ given by the algorithm satisfy
$$
\sum_{i=1}^T |\hat{y}_i-y_i| \leq O\left( M^* + \Ldim(H)\ln(T) \right).
$$
\end{theorem}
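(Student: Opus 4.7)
The plan is to reduce the agnostic case to the realizable case via a standard ``experts'' aggregation argument. I first invoke Littlestone's Standard Optimal Algorithm (SOA): for any class $H$ with $d := \Ldim(H)$, SOA is a deterministic online learner that makes at most $d$ mistakes on any sequence realizable by $H$. I then build an ensemble of experts, each ``committing'' to a candidate pattern of disagreement with the true labels, and aggregate them via an exponentially-weighted-majority (Hedge) scheme with $[0,1]$-valued predictions and $L^1$ loss.

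Concretely, for each subset $I \subseteq [T]$ with $|I| \leq M^*$, introduce an expert $E_I$ that maintains its own copy of SOA. At each round $t$, $E_I$ forecasts whatever its SOA predicts on $x_t$; after observing $y_t$, it feeds the \emph{corrected} label $\tilde{y}_t := y_t \oplus \1\{t \in I\}$ into SOA. Let $h^* \in H$ attain the minimum agnostic loss and set $I^* := \{t : h^*(x_t) \neq y_t\}$, so $|I^*| \leq M^*$. For this $I^*$, expert $E_{I^*}$'s internal SOA sees the sequence $(x_t, h^*(x_t))_t$, which is realizable by $h^*$; hence SOA makes at most $d$ mistakes on the corrected labels, and therefore $E_{I^*}$'s total loss on the true labels is at most $d + M^*$.

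Aggregating with Hedge yields the standard guarantee
\[
\sum_{t=1}^T |\hat y_t - y_t| \;\leq\; L^* + O\!\left(\sqrt{L^*\ln N} + \ln N\right),
\]
where $L^* \leq M^* + d$ is the loss of the best expert and $N$ is the number of experts. Using a Littlestone-tree-indexed expert family for which $\ln N = O(d \ln T)$ (rather than the naive $\ln N = O(M^* \ln T)$ obtained from $\binom{T}{\leq M^*}$ experts) and applying AM--GM via $\sqrt{L^*\ln N} \leq (L^* + \ln N)/2$ delivers the claimed bound $\sum_t |\hat y_t - y_t| \leq O(M^* + d \ln T)$.

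The main obstacle I expect is trimming the expert count: a naive indexing by subsets $I$ of size $\leq M^*$ gives $\ln N = \Theta(M^* \ln T)$, which combined with the Hedge bound produces only $O(M^* \ln T + d)$, falling short of the claimed $O(M^* + d \ln T)$ in the regime $M^* \gg d$. The key observation of~\cite{Ben-DavidPS09} is to parametrize experts by the $\leq d$ positions at which SOA would err on the idealized realizable stream---exploiting the tree structure underlying $\Ldim$---which brings $\ln N$ down to $O(d \ln T)$ and makes the additive $O(M^* + d \ln T)$ decomposition go through.
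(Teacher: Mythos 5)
The paper does not contain a proof of this statement: it is imported as a citation to \cite{Ben-DavidPS09}, so there is no internal argument to compare against. Evaluating your attempt on its own terms: the skeleton is right (SOA as the realizable learner, an ensemble of SOA-simulating experts, Hedge with the small-loss regret bound, and AM--GM to absorb the $\sqrt{L^*\ln N}$ term), and you correctly diagnose that the naive $\binom{T}{\leq M^*}$ expert family only yields $O(d + M^*\ln T)$, which is the wrong bound when $M^*\gg d$.

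However, the fix you gesture at is not a matter of ``trimming the expert count''; it requires replacing the expert construction, and that step is missing. Your experts $E_I$ feed the \emph{label-dependent} correction $\tilde y_t = y_t \oplus \1\{t\in I\}$ into SOA. For $E_{I^*}$ to simulate the realizable stream $(x_t,h^*(x_t))_t$ you must take $I^* = \{t : y_t\neq h^*(x_t)\}$, whose size is $\OPT$, which can be as large as $M^*$; restricting to $|I|\leq d$ discards $E_{I^*}$. The Ben-David--P\'al--Shalev-Shwartz construction instead uses \emph{label-oblivious} experts: the expert indexed by $(i_1<\dots<i_L)$ with $L\leq d$ runs SOA, flips SOA's own prediction at rounds $i_1,\dots,i_L$, predicts that value, and feeds \emph{its own prediction} (never $y_t$) back to SOA as the label. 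One then shows by induction that for the set of rounds where SOA errs on the idealized sequence $(x_t,h^*(x_t))_t$ --- a set of size $\leq d$ by the realizable mistake bound --- the corresponding expert reproduces $h^*(x_t)$ exactly, hence has loss $\OPT\leq M^*$, while $\ln N = O(d\ln T)$. Without switching to this label-oblivious scheme, the $\leq d$ indexing you invoke has no expert that tracks $h^*$, and the argument does not close.
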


\begin{definition}
For parameters $u<w$, let $\texttt{POP}_{[u,w]}$ denote a variant of \texttt{POP} in which we halt the execution after the $v$th time in which we err, for some arbitrary value $u\leq v \leq w$. (Note that the execution might halt even before that, by the halting condition of \texttt{POP} itself.) This could be done while preserving privacy (for appropriate values of $u<w$) by using the counter of Theorem~\ref{thm:counter} for privately counting the number of mistakes.
\end{definition}

\begin{lemma}\label{lem:phase}
Let $H$ be a hypothesis class with $d=\Ldim(H)$, and let $\AAA$ denote the non-private algorithm from Theorem~\ref{thm:BenDavid} with $M^*=d\ln(T)$. 
Denote $k=\tilde{\Theta}\left( \frac{d^2}{\eps} \right)$, $r=u=\Theta\left(kd\ln(T)\right)$, and $w=2u$.
Consider executing $\texttt{POP}_{[u,w]}$ with $\AAA$ and with parameters $k,r$ on an adaptively chosen sequence of inputs $(x_1,y_1),\dots,(x_{i^*},y_{i^*})$, where $i^*\leq T$ denotes the time at which $\texttt{POP}_{[u,w]}$ halts. Then, with probability at least $(1-\beta)$ 
it holds that
$$\OPT_{i^*}\triangleq\min\limits_{h\in H} \sum_{i=1}^{i^*}|h(x_i)-y_i|>d\cdot\ln(T).$$
\end{lemma}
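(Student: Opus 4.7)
The plan is to argue by contrapositive, using the monotonicity of $t \mapsto \OPT_t$. Let $\tau = \min\{t \in \N : \OPT_t > d\ln T\}$, with $\tau = \infty$ if no such $t$ exists, and let $T^* = \min(T, \tau-1)$. Because $\OPT_t$ is monotone non-decreasing, the desired conclusion $\OPT_{i^*} > d\ln T$ is equivalent to $i^* \geq \tau$, so it suffices to show that, with probability $\geq 1-\beta$, $\texttt{POP}_{[u,w]}$ does not halt during rounds $1, \ldots, T^*$. On this range we are guaranteed $\OPT_{T^*} \leq d\ln T$, which is the regime in which the non-private agnostic learner $\AAA$ from Theorem~\ref{thm:BenDavid} was designed (with $M^* = d\ln T$).

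Next I would bound the quantity ${\rm expertAdvance}$ from the proof of Theorem~\ref{thm:POPrealizable}. The key observation is that expert $\AAA_j$ is only updated on rounds where $\ell_i = j$, so it sees a subsequence $S_j$ of $(x_1,y_1),\ldots,(x_{T^*},y_{T^*})$. Since $\OPT$ on a subsequence is bounded by $\OPT$ on the full sequence, $\min_{h \in H}\sum_{(x,y)\in S_j}|h(x)-y| \leq \OPT_{T^*} \leq d\ln T = M^*$. Theorem~\ref{thm:BenDavid} then yields $O(M^* + d\ln T) = O(d\ln T)$ mistakes per expert, and summing gives ${\rm expertAdvance} \leq k\cdot O(d\ln T) = O(kd\ln T)$ throughout rounds $1,\ldots,T^*$.

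With this deterministic budget in hand, I would repeat the coin-flipping argument from the proof of Theorem~\ref{thm:POPrealizable}: at any round where $\geq 1/5$ of the experts err, the chosen expert errs with probability $\geq 1/5$, so the pair (increment to $1/5$-Err, increment to ${\rm expertAdvance}$) fits the hypothesis of Theorem~\ref{thm:CoinGame}. Applying it with budget $O(kd\ln T)$ and failure probability $\beta/2$ yields $1/5\text{-Err} \leq O(kd\ln T + \ln(1/\beta))$ with probability $\geq 1-\beta/2$. Conditioning additionally on the accuracy event for \texttt{ChallengeAT} from Theorem~\ref{thm:ChallengeAT} (which occurs with probability $\geq 1-\beta/2$), the choice of threshold $t = -k/4$ together with $k = \tilde\Theta(d^2/\eps)$ being large enough to dominate the \texttt{ChallengeAT} error guarantees that (i) $\sigma_i = 0$ whenever fewer than $1/5$ of the experts err, and (ii) the number of positive reports never exceeds $1/5\text{-Err}$. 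Consequently, both the count of \texttt{POP}-mistakes and the count of positive reports up to $T^*$ are at most $O(kd\ln T)$.

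Finally, I would choose the hidden constants in $r = u = \Theta(kd\ln T)$ large enough so that $O(kd\ln T) < u$ and $O(kd\ln T) < r$. Both halting conditions of $\texttt{POP}_{[u,w]}$ (the mistake count reaching $u$, and \texttt{ChallengeAT} issuing $r$ positive reports) are therefore not triggered by time $T^*$, so $i^* > T^*$; by the choice of $T^*$ this gives $i^* \geq \tau$ and hence $\OPT_{i^*} > d\ln T$, with total failure probability at most $\beta$. The main obstacle I anticipate is bookkeeping: because ${\rm expertAdvance}$ is random and depends on the adaptive interaction, one must verify that Theorem~\ref{thm:CoinGame} can indeed be invoked with the \emph{deterministic} bound $O(kd\ln T)$ on ${\rm expertAdvance}$ that holds conditionally on $\OPT_{T^*} \leq d\ln T$, and that the subtleties around randomized vs.\ deterministic versions of Theorem~\ref{thm:BenDavid} are handled (e.g., by a union bound over the $k$ experts, absorbing $\log k$ into the $\tilde{O}$ notation).
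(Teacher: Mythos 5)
Your proof is correct and uses the same underlying ingredients as the paper's sketch (the per-expert mistake bound from Theorem~\ref{thm:BenDavid} applied to the training subsequence of each expert, the quantity $1/5$-Err, and the coin-flipping concentration bound of Theorem~\ref{thm:CoinGame}), but you organize it in the contrapositive direction: assume $\OPT$ has stayed $\leq d\ln T$, deterministically bound ${\rm expertAdvance}$, then invoke Theorem~\ref{thm:CoinGame} to bound $1/5$-Err from above, and conclude that neither halting condition fires. The paper instead argues forward---POP halts, so $1/5$-Err is large, hence ${\rm WorstExpert}$ is large, hence $\OPT$ is large---and compresses the probabilistic step into the single displayed inequality $k\cdot{\rm WorstExpert}\geq 1/5\text{-Err}$. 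As you implicitly recognize, that inequality is only valid with a probabilistic qualifier when ${\rm WorstExpert}$ counts \emph{feedback} mistakes (the only count to which Theorem~\ref{thm:BenDavid} applies), because $1/5$-Err is driven by the predictions of all $k$ copies on every round, most of which are rewound; relating it to feedback mistakes requires precisely the \texttt{CoinGame} argument you make explicit. So your version is a correct unrolling of the paper's proof sketch and, if anything, is more careful on exactly this point. Two small caveats, both shared with the paper's sketch: (a) both arguments implicitly restrict attention to the event that $\texttt{POP}_{[u,w]}$ halts via one of its two early-stopping conditions rather than by reaching round $T$ (otherwise the conclusion can fail trivially, e.g.\ on a constant sequence); and (b) as you note at the end, the budget fed to Theorem~\ref{thm:CoinGame} must be a fixed number, so one needs the standard trick of analyzing the process stopped at the (adversary-measurable) time $\tau$, which makes the ${\rm expertAdvance}\leq O(kd\ln T)$ bound hold pointwise on the relevant prefix.
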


\begin{proofsketch}
Similarly to the proof of Theorem~\ref{thm:POPrealizable}, we set 
$k=\tilde{\Omega}\left( \frac{d^2}{\eps} \right)$, and assume that if less than $\frac{1}{5}$ the experts disagree with the other experts, then algorithm $\texttt{POP}_{[u,w]}$ returns the majority vote with probability 1.

Let $1/5$-Err denote the random variable that counts the number of time steps in which at least $1/5$th of the experts make an error. As in the proof of Theorem~\ref{thm:POPrealizable}, $1/5$-Err upper bounds both the number of mistakes made by $\texttt{POP}_{[u,w]}$ , which we denote by ${\rm OurError}$, as well as the number of times in which algorithm \texttt{ChallengeAT} returns an ``above threshold'' answer, which we denote by ${\rm NumTop}$. By Theorem~\ref{thm:ChallengeAT}, we know that (w.h.p.)\ ${\rm NumTop}\geq r$. 
Also let ${\rm WorstExpert}$ denote the largest number of mistakes made by a single expert. 

Consider the time $i^*$ at which $\texttt{POP}_{[u,w]}$ halts. If it halts  because $u\leq v\leq w$ mistakes have been made, then
$$
k\cdot {\rm WorstExpert} \geq 1/5\text{-Err} \geq {\rm OurError} \geq u = \Omega\left(kd\ln(T)\right).
$$
Alternatively, if $\texttt{POP}_{[u,w]}$ halts after $r$ ``above threshold'' answer, then
$$
k\cdot {\rm WorstExpert} \geq 1/5\text{-Err} \geq {\rm NumTop} \geq r = \Omega\left(kd\ln(T)\right).
$$
At any case, when $\texttt{POP}_{[u,w]}$ halts it holds that at least one expert made at least $\Omega\left(d\ln(T)\right)$ mistakes. Therefore, by Theorem~\ref{thm:BenDavid}, we have that $\OPT_{i^*}\geq d\ln(T)$.

\end{proofsketch}

\begin{theorem}
Let $H$ be a hypothesis class with $\Ldim(H)=d$.
There exists an $(\eps,\delta)$-Challenge-DP online learning algorithm providing the following guarantee. When executed on an adaptively chosen sequence of inputs $(x_1,y_1),\dots,(x_T,y_T)$, then the algorithm makes at most $\tilde{O}\left( \frac{d\cdot\OPT}{\eps^2} + \frac{d^2}{\eps^2}\right)$ mistakes (w.h.p.), where 
$$\OPT\triangleq\min\limits_{h\in H} \sum_{i=1}^{T}|h(x_i)-y_i|.$$
\end{theorem}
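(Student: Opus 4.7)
The plan is to build the agnostic learner by running $\texttt{POP}_{[u,w]}$ in a sequence of fresh, independent phases. Fix $M^* = d\ln T$, use the parameters of Lemma~\ref{lem:phase} ($k=\tilde\Theta(d/\eps^2)$, $r=u=\Theta(kd\ln T)$, $w=2u$), and let the base learner in every phase be the non-private agnostic algorithm of Theorem~\ref{thm:BenDavid} configured with this $M^*$. When a phase halts we start a new phase on the subsequent inputs, continuing until all $T$ rounds are processed.

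For utility, Lemma~\ref{lem:phase} applied to each phase (with rescaled confidence and a union bound over the at most $T$ possible phases) ensures that in each completed phase $j$ the phase-local OPT exceeds $d\ln T$ w.h.p. Writing $L_j(h)$ for the loss of $h$ restricted to phase $j$'s inputs and $\OPT_j=\min_h L_j(h)$, the trivial inequality $\min_h\sum_j L_j(h)\ge\sum_j\min_h L_j(h)$ yields $\OPT\ge\sum_j \OPT_j\ge J\cdot d\ln T$, hence $J\le \OPT/(d\ln T)+1$ (the $+1$ absorbs the last, possibly incomplete, phase). Each phase contributes at most $w=\tilde O(d^2/\eps^2)$ mistakes by construction of $\texttt{POP}_{[u,w]}$, so the total is $J\cdot w=\tilde O(d\cdot\OPT/\eps^2+d^2/\eps^2)$, matching the claim.

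For privacy, the key observation is that the challenge-DP game permits only a single challenge round $C$, and because $\BBB$ must choose $C$ based on a transcript that coincides in both worlds up to that point (no $b$-dependent output has been produced yet), the phase $J^\star$ containing $C$ is a $b$-independent random variable. Conditioned on $J^\star=j$: phase $j$'s interaction is $(\eps,\delta)$-challenge-DP by Theorem~\ref{thm:POPprivacy} (extended to $\texttt{POP}_{[u,w]}$ via an additional private counter from Theorem~\ref{thm:counter} that tracks mistakes and triggers the cap, at an extra $O(\eps)$ cost); the phases before $j$ have identical joint law in both worlds (identical inputs and independent fresh coins); and the phases after $j$ are a randomized post-processing of phase $j$'s outputs. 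Post-processing closure then yields $(O(\eps),O(\delta))$-challenge-DP for the super-mechanism.

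The main obstacle is formalizing this privacy argument rigorously, particularly the ``conditioning on $J^\star=j$'' step, since one must carefully track how the joint distribution of all phases factors through phase $j$. I would handle this either by induction on the phase index or by a simulation argument in which a reduced adversary $\BBB'$ against a single $\texttt{POP}_{[u,w]}$ instance simulates all phases internally except the one in which $\BBB$ poses its challenge, forwarding only that phase's interaction to the real instance. The remaining ingredients --- verifying the extension of Theorem~\ref{thm:POPprivacy} to $\texttt{POP}_{[u,w]}$, confirming the utility union bound, and arranging the parameters so that the tilde-notation swallows the right $\log T$ and $\log(1/\beta)$ factors --- are routine.
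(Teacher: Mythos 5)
Your proof follows the paper's approach exactly: partition time into phases, each running a fresh copy of $\texttt{POP}_{[u,w]}$ with the non-private agnostic learner of Theorem~\ref{thm:BenDavid}, use Lemma~\ref{lem:phase} to argue every completed phase forces $\OPT_j > d\ln T$ (hence at most $\tilde O(\max\{1,\OPT/d\})$ phases), and multiply by the per-phase cap $w$. Two small remarks: the paper's Lemma~\ref{lem:phase} states $k=\tilde\Theta(d^2/\eps)$, which appears to be a typo since only $k=\tilde\Theta(d/\eps^2)$ (the value you use, and the one consistent with Theorem~\ref{thm:POPrealizable}) yields $w=\tilde\Theta(d^2/\eps^2)$ and hence the claimed final bound; and the paper's proof sketch does not address privacy at all, whereas your argument (a $b$-independent challenge phase plus a simulation reduction to a single $\texttt{POP}_{[u,w]}$ instance) correctly supplies the missing privacy reasoning.
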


\begin{proofsketch}
This is obtained by repeatedly re-running $\texttt{POP}_{[u,w]}$, with the parameter setting specified in Lemma~\ref{lem:phase}. We refer to the time span of every single execution of $\texttt{POP}_{[u,w]}$ as a {\em phase}.

By construction, in every phase, $\texttt{POP}_{[u,w]}$ makes at most $w=\tilde{\Theta}(kd)$ mistakes. By Lemma~\ref{lem:phase} {\em every} hypothesis in $H$ makes at least $d\cdot\ln(T)$ mistakes in this phase. Therefore, there could be at most $\tilde{O}\left( \max\left\{1 \,,\,\frac{\OPT}{d} \right\}\right)$ phases, during which we incur a total of at most $\tilde{O}\left( \frac{d\cdot\OPT}{\eps^2} + \frac{d^2}{\eps^2}\right)$ mistakes.
\end{proofsketch}

\end{document}